\documentclass[conference]{IEEEtran}
\IEEEoverridecommandlockouts
\usepackage{cite,url}
\usepackage{amsmath,amssymb,amsfonts,amsthm}
\usepackage{graphicx}
\usepackage{textcomp}
\usepackage{xcolor}
\def\BibTeX{{\rm B\kern-.05em{\sc i\kern-.025em b}\kern-.08em
    T\kern-.1667em\lower.7ex\hbox{E}\kern-.125emX}}

\usepackage{bm}
\usepackage{lipsum,booktabs}
\usepackage{algorithm,algorithmicx,algpseudocode}
\DeclareMathSymbol{\shortminus}{\mathbin}{AMSa}{"39}
\newcommand{\B}{\hat{B}}
\renewcommand{\P}{\mathbb{P}}

\newtheorem{theorem}{Theorem}
\newtheorem{proposition}{Proposition}
\newtheorem{definition}{Definition}
%
\renewcommand{\P}{\mathbb{P}}

\makeatletter
\newcommand\ntiny{\@setfontsize\ntiny\@vipt\@viipt}
\makeatother

\begin{document}

\title{Constraining Anomaly Detection with Anomaly-Free Regions}




\author{\IEEEauthorblockN{Maximilian Toller}
\IEEEauthorblockA{\textit{Know Center} \\
Graz, Austria \\
mtoller@know-center.at}
\and
\IEEEauthorblockN{Hussain Hussain}
\IEEEauthorblockA{\textit{Graz University of Technology} \\
Graz, Austria \\
hussain@tugraz.at}
\and
\IEEEauthorblockN{Roman Kern}
\IEEEauthorblockA{\textit{Graz University of Technology} \\
Graz, Austria \\
rkern@tugraz.at}
\and
\IEEEauthorblockN{Bernhard C. Geiger}
\IEEEauthorblockA{\textit{Know Center} \\
Graz, Austria \\
geiger@ieee.org}
\thanks{The Know-Center is funded within COMET Competence Centers for Excellent Technologies under the auspices of the Austrian Federal Ministry of Transport, Innovation and Technology, the Austrian Federal Ministry for Digital and of Economic Affairs, the Austrian Research Promotion Agency (FFG), and by the province of Styria. The COMET programme is managed by the FFG.}

}

\maketitle

\begin{abstract}
        We propose the novel concept of anomaly-free regions (AFR) to improve anomaly detection.
		An AFR is a region in the data space for which it is known that there are no anomalies inside it, e.g., via domain knowledge.
        This region can contain any number of normal data points and can be anywhere in the data space.
        AFRs have the key advantage that they constrain the estimation of the distribution of non-anomalies: The estimated probability mass inside the AFR must be consistent with the number of normal data points inside the AFR.
        Based on this insight, we provide a solid theoretical foundation and a reference implementation of anomaly detection using AFRs.
        Our empirical results confirm that anomaly detection constrained via AFRs improves upon unconstrained anomaly detection.
		Specifically, we show that, when equipped with an estimated AFR, an efficient algorithm based on random guessing becomes a strong baseline that several widely-used methods struggle to overcome.
        On a dataset with a ground-truth AFR available, the current state of the art is outperformed.
\end{abstract}

\begin{IEEEkeywords}
anomaly detection, maximum likelihood estimation, constrained optimization
\end{IEEEkeywords}

\section{Introduction}

Anomaly detection (AD) is a common data mining task where the goal is to distinguish deviating data, i.e., anomalies, from normal data.
Many common AD methods are based on the so called concentration assumption~\cite{ruff2021unifying}. 
The concentration assumption implies that normal data points form dense clusters, where normal data points have low distances to other normal data points.
Anomalies are then typically found in less dense regions, or have a higher distance to other points.
The concentration assumption is made by a number of prominent AD methods (e.g. LOF~\cite{breunig2000lof}, SVDD~\cite{tax2004support}, Isolation Forest~\cite{liu2008isolation}, and others~\cite{breunig2000lof,tax2004support,liu2008isolation,toller2021cluster,zhang2021elite,pang2019deep,jiang2023anomaly,perini2023learning}) and has demonstrated its usefulness in a number of scenarios of practical relevance.
However, in cases where additional knowledge about the nature of the dataset is available, such knowledge cannot be effectively integrated in the assumption.

\paragraph{Concept.} 
We propose a novel concept, namely Anomaly Free Regions (AFR), which is  similar to the concentration assumption.
An AFR is a region in the data space for which it is known that there are no anomalies inside, e.g., via domain knowledge.
This region can contain any number of normal data points and can be anywhere in the data space.
The AFR does not need to coincide with a high-density region---it could be a region that contains no data points at all.
Hence, using AFRs is appropriate in cases where domain knowledge allows one to derive such regions and certain parts of the data space do not exhibit anomalous behavior.
Prior works that include such domain knowledge are related to mechanical engineering~\cite{wei2019review,sha2022regional}, mobile data traffic monitoring~\cite{ahmed2022rcad}, building energy consumption~\cite{himeur2021artificial}, and digital twins~\cite{huang2021digital}.

\paragraph{Novelty.} 
Several typical concepts in AD literature are comparable to AFRs, such as density level sets~\cite{ruff2021unifying} and inlier clusters~\cite{degirmenci2022efficient}.
AFRs have two distinct properties: a) they are not required to contain normal data points; b) they are a subset of the data space that has zero probability of containing anomalies.
As such, AFRs encode prior information about the anomaly class, which makes them comparable to semi-supervised AD methods (e.g. \cite{pang2023deep,chang2023data}) and Positive-Unlabeled (PU) learning~\cite{ju2020pumad,perini2023learning}.
However, in contrast to semi-supervised methods, AFRs do not require explicit labels for either normal or abnormal data points.

\paragraph{Illustration.} 
AFRs are especially useful for probabilistic AD methods since they constrain the estimation of the distribution of non-anomalies: The estimated probability mass inside the AFR must be consistent with the number of normal data points inside the AFR.
For example, if the AFR is empty, we would expect very little probability mass inside it.
Conversely, if the AFR contains many non-anomalies, we would expect the distribution of non-anomalies to have much of its probability mass inside the AFR.
Figure~\ref{fig:figure1} provides an illustration of this benefit.

\begin{figure*}
\centering
\includegraphics[scale=0.275]{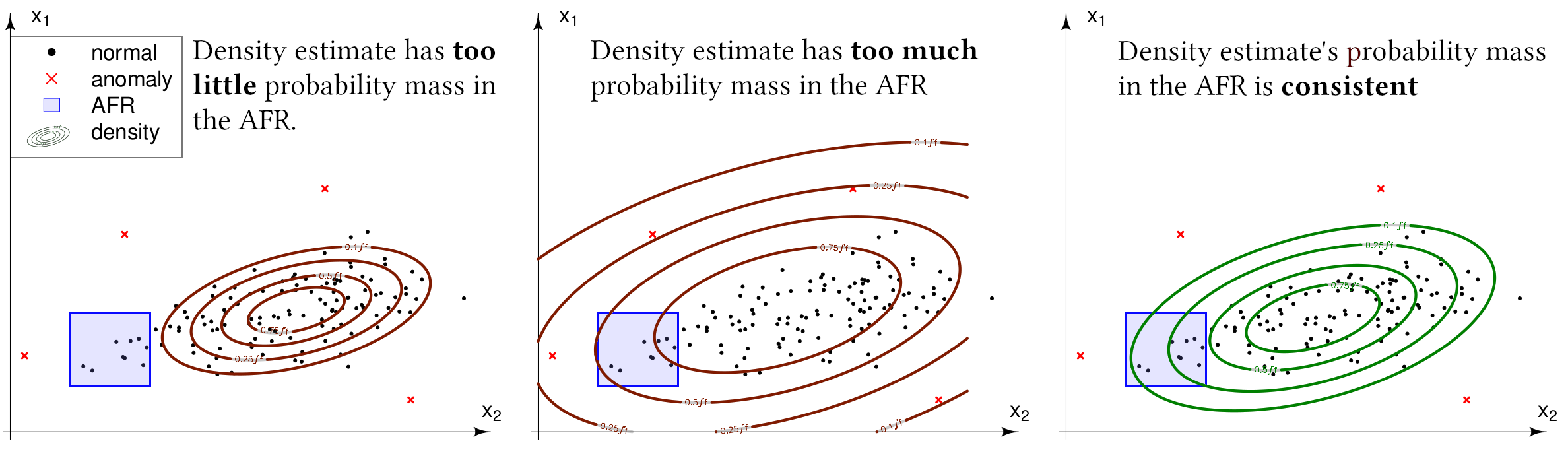}
\caption{Constraining density estimation with an AFR. For a fixed toy dataset and a fixed AFR, three different density estimates for normal data points are compared. The \textit{left} and \textit{middle} density estimates are inconsistent and give a too low likelihood of observing the (known normal) data points inside the AFR. The \textit{right} density estimate makes use of the available knowledge (AFR) and gives a reasonable likelihood for the data points inside the AFR.}
\label{fig:figure1}
\end{figure*}

\paragraph{Mathematical foundation.}
Due to their mathematical nature, we can analyse AFRs in great detail.
Thus, we build a solid foundation for a novel, \textit{constrained} form of AD and construct a concrete reference algorithm.
This algorithm provides a sound basis for further methods based on AFRs.

\paragraph{Real-world usefulness.}
In settings where no AFR is known, an AFR can be estimated.
We show the proposed algorithm behaves solidly in cases where no AFR is known and is competitive with state of the art AD methods.

\paragraph{Contributions.}
Our contributions include the following:
\begin{itemize}
    \item Proposal of the novel AFR concept
    \item Theoretical foundation for constrained anomaly detection
    \item Proposition of an algorithm for constrained density estimation  and application to anomaly detection
    \item Demonstration of the behaviour on several benchmark datasets where no AFRs are known or may not exist.
    \item Demonstration of very good performance on a benchmark dataset where an AFR is known.
\end{itemize}

\section{Related Work}
There are two main branches of literature that describe theory and methods which relate to constrained AD via AFRs.
\begin{enumerate}
	\item Identification of two-component mixture models where one component is partially known and the other is unknown.
	\item Semi-supervised AD, one-class classification and PU learning when positive class labels represents anomalies.
\end{enumerate}
The first branch stems from computational statistics and information theory.
It is primarily concerned with distributions/mixtures of the form 
\begin{equation}\label{eq:start}
	g(x) = (1-p)f_N(x,\bm{\theta}) + p f_\infty(x)
\end{equation} 
where $f_N$ is the ``normal'' distribution, $\bm{\theta}$ are its parameters, $f_\infty$ is the unknown (anomaly) distribution, and $p$ is the mixing parameter.
Works within this branch typically start their analysis with (a variant of) Equation~\eqref{eq:start} and then split according to the assumptions they make.
These distinguishing assumptions are \textit{symmetry of} $f_\infty$ 
~\cite{ma2015flexible}, \textit{membership of $f_N$ in the Gaussian family} and \textit{linearly constrained $f_\infty$}~\cite{al2017semiparametric}, \textit{Gaussian zero-mean $f_N$}~\cite{song2010estimating}, and that the \textit{true form of $f_N$ and its parameters} are given~\cite{shen2018mm,shokirov2010on,patra2016estimation}.
These works are related to our work because we also start our theoretical investigation with Equation~\eqref{eq:start}.
Our work is different from the cited works since we constrain $f_\infty$ via AFRs.
As we will see later, this implies a consistency constraint via the Law of Large Numbers (LLN) that further distinguishes our setup.
In the context of AD, AFRs are easier to justify than the assumptions made in related work, e.g., symmetry of $f_\infty$.

The second branch of related work stems from machine learning and data science.
Semi-supervised AD, and PU learning are a form of classification that uses labeled and unlabeled data to delimit a region where most data from the negative class can be found.
In the context of AD, this translates to learning to delimit the region where normal data live using labeled anomalies and unlabeled data.
Recently, many relevant semi-supervised and PU-learning schemes were proposed for AD.
These are \textit{deep}~\cite{pang2023deep}, use \textit{fast random projections}~\cite{bhattacharya2021fast}, use \textit{metric-learning}~\cite{ju2020pumad}, and are \textit{specialized on tabular data}~\cite{chang2023data}.
Many other related approaches are summarized in the survey by Ruff et al.~\cite{ruff2021unifying}.
Semi-supervised AD methods are related to our work since they also leverage available prior information to facilitate the detection task.
However, unlike these methods, our work does not focus on generalizing from labels and instead uses an AFR to encode prior information about the anomaly class.

\section{Theory}\label{sec:theory}
The starting point of our analysis is the two-component mixture described in \eqref{eq:start}.
The probability density function (PDF) $f_N$ generates ``normal'' data,  whereas PDF $f_\infty$ generates anomalies, and $0\le p\le 1$ is the mixing parameter.
The ``$\infty$'' symbol is used in $f_\infty$ to convey the intuition that there are arbitrarily many different anomaly types/classes.
In the following, it is assumed that an AFR exists, which is specified in Definition~\ref{def:AFR}.
\begin{definition}[Anomaly-free Region]\label{def:AFR}
	Let $R\subset \mathbb{R}$ and let $f_\infty$ be the unknown anomaly PDF in \eqref{eq:start}. Subset $R$ is an AFR with respect to $f_\infty$ if it holds that
	\begin{equation}
		\int_{R} f_\infty(x)dx = 0.
	\end{equation}
\end{definition}
In other words, an AFR is a region in which there is zero probability that anomalies occur.
Note that this does not impose a restriction on the region where normal data can occur.

The mixture in \eqref{eq:start} can be thought of as combining two distributions via a (biased) coin flip---with probability $p$ the coin lands on ``heads'' and we draw an anomaly from $f_\infty$, and with probability $1-p$ the coin lands on ``tails'' and we draw a normal data point from $f_N$.
Let $B$ be the Bernoulli random variable that describes this coin flip.
By making $B$ explicit, we can rewrite \eqref{eq:start} as 
\begin{align}
	g(x,B) &= \bigl((1-p)f_N(x,\bm{\theta})\bigr)^{1-B}\bigl(p f_\infty(x)\bigr)^{B}.\label{eq:mixture_better}
\end{align}

Next, the mixture in \eqref{eq:mixture_better} needs to be connected with a dataset.
Let $\bm{x}=[x_1,\mathellipsis,x_n]\in\mathbb{R}^n$ be a dataset obtained by drawing $n$ data points independently from $g(\cdot|B)$ according to a sequence of iid Bernoulli variables $\bm{B}=[B_1,\mathellipsis,B_n]$.
Because of independence, the likelihood of the parameters given dataset $\bm{x}$ is given by 
\begin{equation}\label{eq:likelihood_impossible}
	\mathcal{L}(\bm{\theta},p,\bm{B}|\bm{x}) = \prod_{t=1}^{n}  \bigl((1-p)f_N(x_t,\bm{\theta})\bigr)^{1-B_t}\bigl(p f_\infty(x_t)\bigr)^{B_t}.
\end{equation}
For each $x_t$ in $\bm{x}$, the corresponding $B_t$ indicates if $x_t$ was drawn from $f_N$ or $f_\infty$, i.e., if $x_t$ is normal or an anomaly.
Hence, the underlying anomaly detection problem is solved if we figure out $\bm{B}$.
However, it is likely impossible to optimize $\bm{B}$ without knowing anything about $f_\infty$.
To resolve this dilemma, one can replace $\bm{B}$ with an estimate $\hat{\bm{B}}$.
This way, one can rewrite the likelihood in ~\eqref{eq:likelihood_impossible} as
\begin{equation}\label{eq:likelihood}
	\mathcal{L}(\bm{\theta},p|\hat{\bm{B}},\bm{x}) = \prod_{t=1}^{n}  \bigl((1-p)f_N(x_t,\bm{\theta})\bigr)^{1-\B_t}\bigl(p f_\infty(x_t)\bigr)^{\B_t}		
\end{equation}
which is the same as \eqref{eq:likelihood_impossible} except that $\bm{B}$ is replaced with its estimate $\hat{\bm{B}}$, and that this estimate is given. Notice that in likelihood function~\eqref{eq:likelihood} the anomaly distribution $f_\infty$ has no learnable parameters, which has the advantage that the maximum of $\mathcal{L}$ only depends on $p$, $\bm{\theta}$. 

\subsection{Constrained Likelihood}

Directly optimizing the likelihood in~\eqref{eq:likelihood} is not advisable since the result may violate two important constraints:
\begin{enumerate}
	\item \textbf{Valid parameter constraint}: We must ensure that only permitted values of $\bm{\theta}$ and $p$ are considered during the optimization, i.e., $\bm{\theta}$ must be a real-valued vector/matrix and $0\le p\le 1$.
	\item \textbf{Consistency constraint}: 
    We must ensure that $\bm{\theta}$ and $p$ are such that the probability measure of $g$ inside $R$ is consistent with the number of data we observe inside $R$ due to the LLN.
\end{enumerate}	
In contrast to the first constraint, the second is less intuitive.
To further motivate the necessity of the consistency constraint, note that the probability of observing data outside $R$ is given by
\begin{align}\label{eq:anomaly_prob}
	\P(x\not\in R) &= 1-\int_{R} g(x) = (1-p)\left(1-\int_{R}f_N(x,\bm{\theta})\right) + p \nonumber\\
	&= 1-(1-p)\int_{R}f_N(x,\bm{\theta})=P.
\end{align}
If $\bm{x}$ is drawn from $g$, then due to the LLN $P$ can be estimated via
\begin{equation}
	P=P(\bm{\theta},p)\approx \hat{P}=\frac{1}{n}\sum_{t=1}^{n}\mathbf{1}_{\overline{R}}(x_t)\label{eq:P_hat}
\end{equation}
where $\mathbf{1}_{\overline{R}}(\cdot)$ is the indicator function of the complement of $R$.
The consistency constraint has the purpose to ensure that $\bm{\theta}$ and $p$ are chosen such that the approximation in~\eqref{eq:P_hat} holds.

Due to the high relevance of $P$ for AFR-based anomaly detection, it is important to have a reliable estimator for $P$.
Depending on $R$, the true $P$ may be very close to its extreme values $0$ and $1$.
Since the estimator $\hat{P}$ in \eqref{eq:P_hat} is biased at these extremes, it is better to use the Wilson score method


\begin{align}
	P&\in\bar{P} \pm w \nonumber\\
	\bar{P}&=\frac{1}{1+\frac{z^2}{n}}\left(\hat{P}+\frac{z^2}{2n}\right)\label{eq:P_bar}\\
	w&=\frac{z}{1+\frac{z^2}{n}}\sqrt{\frac{\hat{P}(1-\hat{P})}{n}+\frac{z^2}{4n^2}}\label{eq:w}
\end{align}
where $z$ is the $1-\frac{\alpha}{2}$ quantile of the standard normal distribution for significance level $\alpha$.

Adding the two discussed constraints to likelihood \eqref{eq:likelihood} gives the following optimization problem:
\begin{equation}\label{eq:optimization_problem}
	\begin{aligned}
		\textbf{maximize}\quad& \mathcal{L}(\bm{\theta},p|\hat{\bm{B}},\bm{x},R)\\
		\textbf{subject to}\quad& 0\le p \le 1, \bm{\theta}\in\mathbb{R}^q\\
		&\bar{P}-w\le 1-(1-p)\int_{R}f_N(x,\bm{\theta})dx\le \bar{P}+w
	\end{aligned}
\end{equation}
Solving this optimization problem gives the constrained maximum likelihood estimates of parameters $\bm{\theta}$ and anomaly probability $p$.

\subsection[Maximum Likelihood Estimators]{Maximum Likelihood Estimators for $p$ and $\bm{\theta}$}

As we will see, mixture parameter the maximum likelihood estimate (MLE) of $p$ has the same general form regardless of the underlying normal distribution $f_N$, while the MLE of $\bm{\theta}$ is specific to $f_N$.
In the following, we abbreviate $I=\int_{R}f_N(x,\bm{\theta}) dx$ to keep the notation simple.
\subsubsection{Karush-Kuhn-Tucker conditions}
We can rewrite the constrained optimization problem \eqref{eq:optimization_problem} as a system of (in)equations using the Karush-Kuhn-Tucker conditions:
\begin{subequations}\label{eq:KKT}
	\begin{align}
		L &= L(\bm{\theta},p|\hat{\bm{B}},\bm{x})\nonumber\\
		&=
		\begin{aligned}
			\mathcal{L}(\bm{\theta},p|\hat{\bm{B}},\bm{x}) - \lambda_1 \left(p^2-p\right) -\\ \lambda_2\left(\left(1-(1-p)I- \bar{P}\right)^2-w^2\right)\label{eq:L}
		\end{aligned}\\	
		0&= \frac{\partial \mathcal{L}}{\partial p} - \lambda_1(2p-1) - 2\lambda_2 I(1-(1-p)I-\bar{P})\label{eq:dp}\\
		0&= \frac{\partial \mathcal{L}}{\partial \theta^{(1)}} + 2\lambda_2(1-p)\left(1-(1-p)I-\bar{P}\right)\frac{\partial}{\partial \theta^{(1)}} I\label{eq:dt1}\\
		0&= \frac{\partial \mathcal{L}}{\partial \theta^{(2)}} + 2\lambda_2(1-p)\left(1-(1-p)I-\bar{P}\right)\frac{\partial}{\partial \theta^{(2)}} I\label{eq:dt2}\\	
		\vdots&\quad\quad\vdots\quad\quad\quad\quad\vdots\nonumber\\
		0&= \frac{\partial \mathcal{L}}{\partial \theta^{(q)}} + 2\lambda_2(1-p)\left(1-(1-p)I-\bar{P}\right)\frac{\partial}{\partial \theta^{(q)}} I\label{eq:dtq}\\
		0 &= \lambda_1\left(p^2-p\right) \quad\quad\quad\quad\quad\quad\quad\quad\text{\scriptsize(compl. slack.)}\label{eq:comp_slack_1}\\
		0 &= \lambda_2\left(\left(1-(1-p)I-\bar{P}\right)^2-w^2\right)  \;\;\:\text{\scriptsize(compl. slack.)}\label{eq:comp_slack_2}\\
		0 &\ge p^2 - p\\
		w^2 &\ge \left(1-(1-p)I-\bar{P}\right)^2\\
		0 &\le \lambda_1,\lambda_2
	\end{align}
\end{subequations}
Because of complementary slackness~\cite{robinson1972quadratically}, we must have one of the following four cases:
\begin{enumerate}
	\item $\lambda_1=0$ and $\lambda_2=0$: The standard unconstrained MLEs for $\bm{\theta}$ and $p$ satisfy both constraints.
	\item $\lambda_1\neq0$ and $\lambda_2=0$: The valid parameter constraint requires $p$ to be either $p=0$ or $p=1$, whereas $\bm{\theta}$ can be estimated with the standard MLE since $\bm{\theta}$ does not appear in \eqref{eq:comp_slack_1}.
	\item $\lambda_1=0$ and $\lambda_2\neq0$: The consistency constraint requires $p,\bm{\theta}$ such that  $\left(1-(1-p)I-\bar{P}\right)^2=w^2$
	\item $\lambda_1\neq0$ and $\lambda_2\neq0$: Both constraints are active. The implications are the same as for the previous case with the additional (simplifying) condition that either $p=0$ or $p=1$.
\end{enumerate}
The third case is arguably the most interesting since it frequently occurs for bad estimates $\hat{\bm{B}}$, and since it is hard to solve.

\subsubsection[MLE of p]{MLE of $p$}
Parameter $p$ is the mixture weight of PDF $g$ and thus is equal to the probability of observing an anomaly.
The MLE of $p$ has four possible solution candidates that depend on the cases described above.
In case 1, we have the well-known result $p=\frac{1}{n}\sum_{t=1}^{n}\B_t$, which is the standard unconstrained MLE;
In cases 2 and 4, $p$ must either be $p=0$ or $p=1$ to satisfy the first constraint.
Case 3 has the following solution:
\begin{theorem}\label{thm:p}
	For $\lambda_1=0$ and $\lambda_2\neq 0$, the equation system~\eqref{eq:KKT} has the following solution for $p$
	\begin{equation}\label{eq:p_mle_constrained}
		p=\frac{1}{n-\Omega}\sum_{t=1}^{n}\B_t
	\end{equation}
	where $\Omega<n-\sum_{t=1}^{n}\B_t$ is the density surplus gradient, which is given by
	\begin{equation}\label{eq:omega}
		\Omega=
  \frac{\frac{\partial}{\partial \theta^{(1)}}\log\mathcal{L}}{\frac{\partial}{\partial \theta^{(1)}}\log I}I = \mathellipsis = \frac{\frac{\partial}{\partial \theta^{(q)}}\log\mathcal{L}}{\frac{\partial}{\partial \theta^{(q)}}\log I}I
	\end{equation}
 and where (either of the) derivatives of the right-hand side are evaluated at an MLE of $\theta$.
\end{theorem}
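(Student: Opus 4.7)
The plan is to work inside case 3 ($\lambda_1=0$, $\lambda_2\neq 0$) of the KKT decomposition and eliminate the Lagrange multiplier $\lambda_2$ together with the common factor $(1-(1-p)I-\bar P)$ from the stationarity conditions \eqref{eq:dp}--\eqref{eq:dtq}. Complementary slackness \eqref{eq:comp_slack_2} combined with $\lambda_2\neq 0$ forces the consistency constraint to be \emph{active}, so $(1-(1-p)I-\bar P)^2=w^2>0$, which guarantees that this common factor is nonzero and can safely be divided out; this is the key algebraic move that unlocks a closed form for $p$.

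First I would read off, under $\lambda_1=0$, the two relations $\partial_p\mathcal{L}=2\lambda_2 I(1-(1-p)I-\bar P)$ from \eqref{eq:dp} and $\partial_{\theta^{(i)}}\mathcal{L}=-2\lambda_2(1-p)(1-(1-p)I-\bar P)\,\partial_{\theta^{(i)}} I$ from \eqref{eq:dt1}--\eqref{eq:dtq}. Dividing the former by $I$ and the latter by $\partial_{\theta^{(i)}} I$, both expressions equal $\pm 2\lambda_2(1-(1-p)I-\bar P)$, so equating gives $\partial_{\theta^{(i)}}\mathcal{L}/\partial_{\theta^{(i)}} I=-(1-p)\,\partial_p\mathcal{L}/I$. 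Dividing by $\mathcal{L}$ and applying the chain rule $\partial_{\theta^{(i)}} I=I\,\partial_{\theta^{(i)}}\log I$ converts this into a relation between log-derivatives whose right-hand side does not depend on $i$. This $i$-independence is precisely what makes the chain of equalities in \eqref{eq:omega} well defined and isolates the quantity $\Omega$ as the common value across all coordinates $\theta^{(1)},\dots,\theta^{(q)}$.

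Second, I would substitute the explicit form of $\partial_p\log\mathcal{L}$ obtained by differentiating \eqref{eq:likelihood} directly, namely $\partial_p\log\mathcal{L}=S/p-(n-S)/(1-p)$ with $S=\sum_{t=1}^n \B_t$. The relation from the previous step then collapses to a single scalar equation in $p$; clearing the $p(1-p)$ denominator and collecting terms produces a linear equation whose solution is $p(n-\Omega)=S$, which rearranges to \eqref{eq:p_mle_constrained}. The assumption $\Omega<n-S$ stated in the theorem is exactly what guarantees $0<p<1$, consistent with the valid-parameter constraint being inactive ($\lambda_1=0$); this would be verified at the end by inspecting when the solution lies in the open unit interval.

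The main obstacle I expect is bookkeeping rather than any single sharp idea: carefully converting between derivatives of $\mathcal{L}$, of $I$, and of their logarithms so that the $I$ factor in the definition of $\Omega$ is absorbed correctly into $n-\Omega$, and checking that $\partial_{\theta^{(i)}}\log I\neq 0$ for at least one $i$ so that the ratio defining $\Omega$ is not of the indeterminate form $0/0$; the latter should follow from $f_N$ being non-degenerate on $R$ and depending nontrivially on $\bm\theta$, and only that coordinate is needed to evaluate \eqref{eq:omega}.
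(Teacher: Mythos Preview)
Your proposal is correct and follows essentially the same route as the paper's own proof: eliminate $\lambda_2$ between the stationarity condition for $p$ and the stationarity condition for an arbitrary $\theta^{(i)}$, pass to log-derivatives, and then insert $\partial_p\log\mathcal{L}=S/p-(n-S)/(1-p)$ to obtain the linear equation $p(n-\Omega)=S$; the $i$-independence argument for the chain in \eqref{eq:omega} is also exactly how the paper concludes. If anything, you are slightly more careful than the paper in explicitly invoking complementary slackness to justify that $1-(1-p)I-\bar P\neq 0$ before dividing by it.
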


Thus, the constrained solution for $p$ is similar to the unconstrained MLE, with the only difference being the subtraction of $\Omega$ in the denominator. Density surplus gradient $\Omega$ is 0 in the unconstrained case; positive if the $p_\text{MLE}$ is too ``small''; and negative if $p_\text{MLE}$ is too ``large'' to fulfil the consistency constraint.

\subsubsection[MLE of theta]{MLE of $\bm{\theta}$ under the Gaussian assumption}\label{sec:theta}
Since $f_N$ can be arbitrary, $\bm{\theta}$ has a different maximum likelihood solution for most distributions.
In the following, we discuss the MLE for Gaussian $f_N$ with $\bm{\theta}=[\mu,\sigma^2]$ under the assumption that AFR $R$ is an interval $[a,b]$.
Other distributions are discussed in the Appendix.

\begin{theorem}\label{thm:sigma2}
	For $\lambda_2\neq0$, Gaussian $f_N=\mathcal{N}(\mu,\sigma^2)$ and $R=[a,b]$, equation system~\eqref{eq:KKT} has the following solution for $\sigma^2$
	
	\begin{equation}\label{eq:sigma2_solution}
		\sigma^2=\frac{\displaystyle \beta m}{\displaystyle m+\beta W_r\left(\gamma\right)}
	\end{equation}
	with 
	\begin{subequations}
		\begin{align}
			\bar{x}&=\frac{1}{n-\sum_{t=1}^{n}\B_t}\sum_{t=1}^{n}(1-\B_t)x_t\label{eq:x_bar}\\
			\bar{x^2}&=\frac{1}{n-\sum_{t=1}^{n}\B_t}\sum_{t=1}^{n}(1-\B_t)x_t^2\label{eq:x2_bar}\\
			m&=\frac{1}{2}\left((a-\mu)^2-(b-\mu)^2\right)\label{eq:m}\\
			\alpha &= \bar{x^2} - \mu\bar{x}+(\mu-\bar{x})a\label{eq:alpha}\\
			\beta &= \bar{x^2} - \mu\bar{x}+(\mu-\bar{x})b\label{eq:beta}\\
            \gamma&=\left(\frac{\alpha-\beta}{\beta^2}\right)me^{-\frac{m}{\beta}}
		\end{align}
	\end{subequations}
	and where $W_r(\cdot)$ is the $r$-Lambert function~\cite{mezHo2017generalization} with $r=-\frac{\alpha}{\beta}e^{-\frac{m}{\beta}}$ .
\end{theorem}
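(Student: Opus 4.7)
My plan is to reduce the constrained stationarity system to a scalar transcendental equation in $\sigma^2$ and then recognise it as an instance of the $r$-Lambert function. The first move is to eliminate the multiplier $\lambda_2$ by taking the ratio of the stationarity equations \eqref{eq:dt1} and \eqref{eq:dt2} specialised to $\theta^{(1)}=\mu$ and $\theta^{(2)}=\sigma^2$. Because $\lambda_2\neq 0$, complementary slackness \eqref{eq:comp_slack_2} forces $|1-(1-p)I-\bar P|=w>0$, so the common factor $2\lambda_2(1-p)(1-(1-p)I-\bar P)$ is non-zero and cancels, leaving the parameter-symmetric identity
\begin{equation*}
(\partial\mathcal{L}/\partial\mu)\,(\partial I/\partial\sigma^2) \;=\; (\partial\mathcal{L}/\partial\sigma^2)\,(\partial I/\partial\mu).
\end{equation*}
Note that \eqref{eq:dp} is not required here, which keeps the derivation independent of the status of $\lambda_1$.

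Next I will substitute the Gaussian-specific derivatives. The log-likelihood pieces are
\begin{equation*}
\frac{\partial\log\mathcal{L}}{\partial\mu}=\frac{N_0(\bar x-\mu)}{\sigma^2},\qquad \frac{\partial\log\mathcal{L}}{\partial\sigma^2}=\frac{N_0(\bar{x^2}-2\mu\bar x+\mu^2-\sigma^2)}{2\sigma^4},
\end{equation*}
with $N_0=n-\sum_{t=1}^n\B_t$ and $\bar x,\bar{x^2}$ as in \eqref{eq:x_bar}--\eqref{eq:x2_bar}. For $I=\int_a^b\mathcal{N}(x;\mu,\sigma^2)\,dx$, the pointwise identities $\partial_\mu\mathcal{N}=-\partial_x\mathcal{N}$ and $\partial_{\sigma^2}\mathcal{N}=\tfrac12\partial_x^2\mathcal{N}$ (the Gaussian heat equation) reduce both $I$-derivatives to boundary terms,
\begin{equation*}
\frac{\partial I}{\partial\mu}=\mathcal{N}(a)-\mathcal{N}(b),\qquad\frac{\partial I}{\partial\sigma^2}=\frac{(a-\mu)\mathcal{N}(a)-(b-\mu)\mathcal{N}(b)}{2\sigma^2}.
\end{equation*}
Plugging these four quantities into the ratio identity, cancelling the shared prefactor $N_0/(2\sigma^4)$, and regrouping terms by $\mathcal{N}(a)$ and $\mathcal{N}(b)$, I expect the quantities $\alpha,\beta$ from \eqref{eq:alpha}--\eqref{eq:beta} to fall out through the elementary identity $\bar{x^2}-2\mu\bar x+\mu^2-(\bar x-\mu)(c-\mu)=\bar{x^2}-\mu\bar x+(\mu-\bar x)c$ applied for $c\in\{a,b\}$. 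Using $\mathcal{N}(a)/\mathcal{N}(b)=e^{-m/\sigma^2}$ (direct from \eqref{eq:m}), the identity collapses to the scalar equation $(\alpha-\sigma^2)\,e^{-m/\sigma^2}=\beta-\sigma^2$.

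To close the argument I set $u:=m/\sigma^2$ and then $Y:=u-m/\beta$. Multiplying the scalar equation by $u e^u$ rewrites it as $(\beta u-m)e^u=\alpha u-m$, and the shift separates the exponential from the linear part, producing $Y e^Y+rY=\gamma$ with $r=-\tfrac{\alpha}{\beta}e^{-m/\beta}$ and $\gamma=\tfrac{(\alpha-\beta)m}{\beta^2}e^{-m/\beta}$ exactly as stated. This is the defining equation of the $r$-Lambert function~\cite{mezHo2017generalization}, so $Y=W_r(\gamma)$; undoing the substitutions yields $\sigma^2=\beta m/(m+\beta W_r(\gamma))$, which is precisely \eqref{eq:sigma2_solution}.

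The step I anticipate to take the most care is the algebraic collapse described in the middle paragraph. One must verify that the two sides of the ratio identity indeed reorganise into the clean form $\mathcal{N}(a)(\alpha-\sigma^2)=\mathcal{N}(b)(\beta-\sigma^2)$. This relies on the $c$-parametric identity above and on the fortunate fact that the Gaussian density ratio at the AFR boundary depends on $\sigma^2$ only through the single exponential $e^{-m/\sigma^2}$. Without these cancellations, the resulting transcendental equation would not be of $r$-Lambert type, and the closed-form \eqref{eq:sigma2_solution} would be unavailable; the remaining bookkeeping (the KKT ratio and the invocation of $W_r$) is routine.
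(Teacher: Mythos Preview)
Your proposal is correct and follows essentially the same route as the paper: both eliminate $\lambda_2$ by equating the $\mu$- and $\sigma^2$-stationarity conditions (the paper phrases this via the $\Omega$ identity of Theorem~\ref{thm:p}, which is the same ratio), substitute the Gaussian derivatives to reach the scalar equation $(\alpha-\sigma^2)e^{-m/\sigma^2}=\beta-\sigma^2$ (the paper writes it as $1-\alpha z=e^{mz}(1-\beta z)$ with $z=1/\sigma^2$), and then recognise the $r$-Lambert form. The only cosmetic differences are your use of the heat-equation identity to obtain $\partial I/\partial\sigma^2$ and your explicit shift $Y=u-m/\beta$, whereas the paper cites the canonical form in~\cite{mezHo2017generalization} directly.
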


Theorem~\ref{thm:sigma2} gives an analytic expression for variance $\sigma^2$ if the consistency constraint is active---if it is not, $\sigma^2$ is given by the standard Gaussian variance estimator. Since the $r$-Lambert function has up to three real branches, \eqref{eq:sigma2_solution} gives up to three solution candidates for $\sigma^2$. To determine the correct solution, we can leverage \eqref{eq:omega}: Only one of the three branches will yield a real-valued $\sigma$ such that $\Omega_\mu = \Omega_{\sigma^2}$. 

\begin{figure}[!t]
\centering 

\includegraphics[scale=0.6]{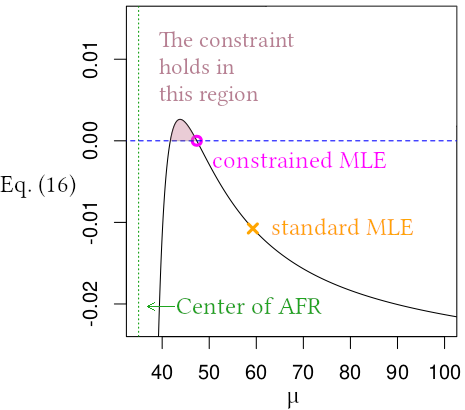}

\caption{Illustration of the estimation procedure for $\mu$. Since Eq.~\eqref{eq:mu} is quasi-concave, it is easy to find the constrained MLE, which is located at the function's root that is closer to the unconstrained MLE. The center of the AFR is at $\frac{a+b}{2}$, and there we have $\frac{\partial L}{\partial \mu}=0$, so Eq.~\eqref{eq:mu} is undefined at $\frac{a+b}{2}$.}
\label{fig:constraint}

\end{figure}
There is no analytic expression for mean $\mu$ because this would require an analytic expression for $\int_{a}^{b}e^{-x^2} dx$, which does not exist.
However, a careful study of equation system~\eqref{eq:KKT} reveals a straightforward scheme for accurately approximating $\mu$:
Let $s_B=\sum_{t=1}^n \B_t$, let $\Phi(\cdot)$ be the standard Gaussian cumulative distribution function, and let $\Phi_\theta(x) = \Phi\left(\frac{(x-\mu)(m+\beta W_r(\gamma))}{\beta m}\right)$.
Theorems \ref{thm:p} and \ref{thm:sigma2} allow us to write the constraint equation~\eqref{eq:comp_slack_2} as a function of $\mu$.

\begin{equation}\label{eq:mu}
 h(\mu|\bm{x},\hat{\bm{B}},R)=w^2 \shortminus \left(1\shortminus\left(1\shortminus\frac{s_B}{n\shortminus\Omega_\mu}\right) (\Phi_\theta(b)\shortminus\Phi_\theta(a))\shortminus\bar{P}\right)^2
\end{equation}
This function is real-valued for $\mu\in (-\infty;\frac{a+b}{2})$ if $\bar{x} < \frac{a+b}{2}$, or else for $\mu\in (\frac{a+b}{2};\infty)$. Importantly, $\eqref{eq:mu}$ is quasi-concave in its real domain. Hence, it is easy to find its root(s) using standard optimization procedures, e.g., numerical gradient or binary search. In the case where \eqref{eq:mu} has two roots, one can determine the better solution using the log-likelihood function $\ell$. An illustration is depicted in Figure~\ref{fig:constraint}.

	\section{Implementation}
	
		

	
	

	After presenting the theoretical foundation of AFRs and constrained AD, we now focus on practical aspects.
    There are many different ways to implement AFR-based AD.
    Hypothetically speaking, most existing detectors could be equipped with an AFR by leveraging the detector's output to construct estimate $\hat{\bm{B}}$, e.g., via Otsu's thresholding method~\cite{otsu1979threshold}, and plug that into Equation \eqref{eq:mu}.
    However, for complicated detectors, e.g., based on deep learning, it is unclear how the AFR affects the detector, so we instead propose a more direct scheme.

    \subsection{Algorithm}\label{sec:alg}
    In general, constrained AD requires four inputs: i) dataset $\bm{x}$; ii) AFR $R$; iii) anomaly label estimate $\hat{\bm{B}}$, iv) significance level $\alpha$.\footnote{Note that  $\alpha$ is non-critical, since it only affects the speed of the constrained estimation, but not the accuracy.}
    Estimate $\hat{\bm{B}}$ can be arbitrary, i.e., it does not have to be a good estimate.
    We propose to select several $\hat{\bm{B}}$ randomly, e.g. $n_B=5$, and compute the average deviation from the mode of $f_N$ over all $\hat{\bm{B}}$.
    Thus, we can provide $n_B$ instead of a concrete $\hat{\bm{B}}$ as input.
    The necessary steps to achieve this are listed in Algorithm~\ref{alg:ours}.
    
	
	\begin{algorithm}
		\caption{Constrained Anomaly MLE (CAMLE)}
		\label{alg:ours}
		\begin{algorithmic}[1]
			\State\textbf{Input} $\bm{x}$, $R$, $n_B$, $\alpha$
            \State Use $\alpha$ to compute $\bar{P},w$ as in Equations \eqref{eq:P_hat}, \eqref{eq:P_bar}, \eqref{eq:w}
            \State Score$_t\leftarrow 0 \quad\forall t\in 1,\mathellipsis,n$ 
            \For{$j \in 1,\mathellipsis,n_B$}
            \State Randomly initialize $\hat{\bm{B}}$
            \State $\B_t \leftarrow 0 \quad\forall x_t \in R$
            \State $p,\bm{\theta}\leftarrow$ standard MLE$(\bm{x},\hat{\bm{B}})$
            \If{$\left(1-(1-p)\int_{R}f_N(x,\bm{\theta})dx-\bar{P}\right)^2 > w^2$}
			\State $p,\bm{\theta}\leftarrow$ constrained MLE$(\bm{x},\hat{\bm{B}},R,\bar{P},w)$
            \EndIf
            \State Score$_t\leftarrow$ Score$_t+\max_\nu f_N(\nu,\bm{\theta}) - f_N(x,\bm{\theta})\quad \forall t$
             \EndFor
            \State Score$_t\leftarrow \frac{1}{n_B}$ Score$_t\quad \forall t$
			 \State\textbf{Output} Score
		\end{algorithmic}
		
	\end{algorithm}

    Conceptually, Algorithm~\ref{alg:ours} (CAMLE) is comparable to bootstrap aggregation. It draws $n_B$ subsets of data outside the AFR, joins them with data inside the AFR, estimates the parameters, and finally computes an average over all subsets.
    CAMLE has an asymptotic runtime of $\mathcal{O}(n_B\cdot n)$. In most realistic settings, $n_B$ can be set to a small constant, which leads to linear runtime in total.

    \subsection{Estimation of AFR}
    In some settings, a valid AFR is not available and needs to be estimated. Depending on the setting, there are different ways to achieve this. We discuss the following three scenarios.
    
    \subsubsection{Labels available for normal and anomaly class}
    In some settings a few labeled data points of both classes are available.
    Here, a valid AFR can be approximated as any region that contains no labeled anomalies and no unlabeled data points.
    We recommend selecting a region that contains several labeled normal data points, since this will typically improve the quality of the constrained AD solution due to the estimation of anomaly proportion $p$ being more accurate.

    \subsubsection{Labels available only for normal class}
    If labeled normal and unlabeled data points are available, but no labeled anomalies, then the AFR can be estimated via a region that exclusively contains labeled normal data. Here it is reasonable to prefer regions that are remote from unlabeled data, if possible.

    \subsubsection{No labels available}
    In unsupervised settings, no labels are available. Here, there are two options: a) Select an empty region as AFR; b) Include unlabeled data in the AFR. Option a) is safer since it avoids the possibility of falsely including an anomaly in the AFR. However, option b) has higher statistical efficiency and should be preferred if one can make the mild assumption that central, high-density regions of the data space are unlikely to contain anomalies.
    This is also the approach that we use in the benchmark tests in the following section.
	\section{Experiments}
	The theoretical findings and the algorithm presented in this article motivate us to study pertinent and testable research questions.
	
	\begin{enumerate}
		\item [\textbf{RQ1}.] \textbf{Simulation}: How does the estimation performance of the constrained MLE compare against the unconstrained MLE in a controlled setting?
		\item [\textbf{RQ2.1}.] \textbf{Unsupervised benchmark test}: How strong is CAMLE's detection performance on popular benchmark datasets where no AFR is known and has to be estimated?
        \item [\textbf{RQ2.2}.] \textbf{Unsupervised sensitivity analysis}: How does the AFR estimation affect CAMLE's detection performance?
        \item [\textbf{RQ3}.] \textbf{Semi-supervised benchmark test}: How does CAMLE's detection performance compare against other semi-supervised methods on a benchmark dataset with a valid AFR?
		
	\end{enumerate}
	For each research question, a separate experiment is conducted.
	In the following, motivation, setup, data, baselines and results of each experiment are described separately. A combined discussion of all research questions comes afterward.
    All data, descriptions, runtimes and source code can be privately accessed online.\footnote{This article's supplementary material is available at \url{https://github.com/mtoller/afr}}
	
	\subsection{RQ1: Simulation}

    \subsubsection{Motivation}
    The constrained estimators presented above are optimal in the AFR-constrained AD setting, but the merit of the constraint itself needs to be assessed separately.
    The AFR encodes information about anomaly class $f_\infty$, yet it is not obvious that this information is useful for estimating the parameters of normal class $f_N$.
    Moreover, from a theoretical viewpoint, it is interesting to study scenarios in which the classical MLE is sub-optimal.

    \subsubsection{Setup}
    We start with a controlled experiment on data drawn from a known distribution.
    Hence, we use the following simulation procedure:
    We select a Gaussian distribution for $f_N$ and a 50:50 mixture of two uniform distributions for $f_\infty$.
    We uniformly draw 100 different ground-truth parameter combinations from the following ranges: $\mu\in[-5;5],\sigma\in[0.1;2],p\in[0.05;0.95]$.
    The AFR is then fixed to $[\mu-0.98\sigma;\mu+0.99\sigma]$ and the limits of the $f_\infty$ mixture are set to $[\mu-10\sigma;a]$ and $[b;\mu+10\sigma]$, respectively.
    Then, for each ground truth combination $(\mu,\sigma,p,a,b)$, we independently draw 100 datasets consisting of 1000 samples, and 10 different ``guessed'' label sets $\hat{\bm{B}}\sim \text{Bernoulli}(p)$ for each dataset.
    On each resulting dataset, the unconstrained MLE and the constrained MLE of each parameter are first computed using the ground truth labels $\bm{B}$, and then using the 10 ``gussed'' label sets $\hat{\bm{B}}$, respectively.
    The estimates are compared against the ground truth via the median absolute deviation (MAD), and the average over all datasets is computed. We prefer the MAD over the classical mean-squared error since in rare cases the estimation procedure for the r-Lambert function converges to the wrong branch due to floating point inaccuracy.

    \subsubsection{Results}
    The results of the simulation are listed in Table~\ref{tab:simulation}. When the true labels were provided, both constrained and unconstrained MLE performed similarly and achieved small errors. When guessed labels were provided, CAMLE achieved smaller errors than unconstrained MLE for $\mu$ and $\sigma$, but yielded larger errors for the mixture fraction $p$.
    
\smallskip

    In conclusion, this experiment suggests that constrained MLE improves upon unconstrained MLE in a suitable controlled setting where all assumptions are fulfilled.

    \begin{table}[!htb]
    \caption{Median Absolute Deviation between ground truth parameters and estimates. A negative error difference indicates that the unconstrained MLE has lower error, whereas a positive error difference indicates that the constrained MLE is better.}
    \label{tab:simulation}
    \begin{tabular}{l|ccc|ccc}
    &\multicolumn{3}{c}{True labels} & \multicolumn{3}{c}{Guessed labels}\\
    \toprule
    & $\mu$ & $\sigma$ & $p$ & $\mu$ & $\sigma$ & $p$\\
    \midrule
    \textbf{MLE} &0.0409&0.0284 &0.0134 &0.0510 & 1.6435&0.0426\\
    \textbf{CAMLE} & 0.0410& 0.0284& 0.0134&0.0176  &0.6897 & 0.1314 \\
    \bottomrule
    \textbf{Err. Diff.} & -0.001 & 0.000 & 0.000& 0.0334 & 0.9538 & -0.0888
    \end{tabular}
    \end{table}

    \subsection{RQ2.1: Unsupervised Benchmark Test}
	\subsubsection{Motivation}
    Having seen that constrained MLE is promising in a controlled setting, it is natural to study its empirical performance and to see how it compares against widely-used AD methods. 
	In data mining research, it is conventional to conduct such a study on a large variety of diverse AD benchmark datasets. 
    Due to this diversity, we cannot expect that CAMLE's assumptions will be satisfied during the experiment, so the theoretical guarantees will not hold.
    However, it is interesting to observe how CAMLE behaves in practice--especially when no AFR is known and has to be estimated.

    \begin{table*}
    \caption{Results of the unsupervised benchmark test. Depicted values in rows starting with dataset names are AUC-ROC scores.}
    \label{tab:benchmar_results}
    \centering
    \begin{tabular}{l|ccc|ccc|cc}
    Method class&\multicolumn{3}{c|}{Heuristical} & \multicolumn{3}{c|}{Probabilistic} & \multicolumn{2}{c}{Deep learning}\\
    \textbf{Data / Measure}& \textbf{LOF} & \textbf{IForest }& \textbf{COPOD} & \textbf{EM} & \textbf{MLE }& \textbf{CAMLE}& \textbf{AutoEnc }& \textbf{DeepSVDD}\\
    \toprule
  Annthyroid & 0.68 & 0.82 & 0.78 & 0.68 & 0.68 & 0.96 & 0.67 & 0.72 \\ 
  Cardio & 0.51 & 0.93 & 0.92 & 0.91 & 0.92 & 0.71 & 0.95 & 0.31 \\ 
  Cardiotocography & 0.55 & 0.69 & 0.66 & 0.64 & 0.70 & 0.68 & 0.75 & 0.53 \\ 
  Letter & 0.89 & 0.63 & 0.56 & 0.52 & 0.53 & 0.56 & 0.52 & 0.53 \\ 
  Satimage & 0.62 & 0.99 & 0.97 & 0.95 & 0.95 & 0.95 & 0.98 & 0.39 \\ 
  Vowels & 0.87 & 0.77 & 0.50 & 0.55 & 0.56 & 0.59 & 0.60 & 0.40 \\ 
  Waveform & 0.61 & 0.73 & 0.73 & 0.63 & 0.61 & 0.52 & 0.64 & 0.55 \\ 
  Wilt & 0.61 & 0.45 & 0.34 & 0.33 & 0.35 & 0.39 & 0.34 & 0.47 \\ 
  Yeast & 0.47 & 0.40 & 0.38 & 0.40 & 0.42 & 0.44 & 0.40 & 0.49 \\ 
  \midrule
  ø Rank (per method class) & 2.11 & 1.56 & 2.33 & 2.56 & 1.89 & 1.56 & 1.44 & 1.56\\
  \midrule
  ø Rank (overall) &4.11  &    2.56   &   \multicolumn{1}{c}{4.56}  &    6.11   &   4.67   &   \multicolumn{1}{c}{4.11}   &   4.22  &    5.67 \\
  Total runtime (seconds)  &0.53   &  8.81   &   \multicolumn{1}{c}{0.74}   &   0.17   &   0.70   &   \multicolumn{1}{c}{6.05}  &   435.27  &  62.28 
    \end{tabular}
    
    \end{table*}

     \subsubsection{Datasets}
     We use two different groups of datasets, namely \textit{development datasets} and \textit{evaluation datasets}.
     The development datasets are used for preliminary analyses of the approaches, e.g., does the detector encounter an error, how many initial $\hat{\bm{B}}$ should be sampled, and so on. Evaluation datasets are used for the quantitative comparison.
     The rationale for using two different groups is to avoid data dredging (see \cite{andrade2021harking}).
     We aim to ensure that the performance of our method is not specific to the datasets on which we conducted method development.
     
     We use the following publicly available datasets as \textit{development datasets}:
	\textsc{Glass}~\cite{keller2012hics};
	\textsc{Heart}~\cite{dua2019uci};
	\textsc{Hep}~\cite{dua2019uci};
	\textsc{Ion}~\cite{keller2012hics};
	\textsc{Lymph}~\cite{lazarevic2005feature};
	\textsc{Park}~\cite{dua2019uci};
	\textsc{Pima}~\cite{dua2019uci};
	\textsc{Stamps}~\cite{micenkova2015stamp};
	\textsc{Shuttle}~\cite{zhang2009new};
	\textsc{WPBC}~\cite{dua2019uci}.
    We select the following datasets as \textit{evaluation datasets}: \textsc{Annthyroid}~\cite{quinlan1986induction}; \textsc{Cardio}~\cite{ayres2000sisporto}; \textsc{Cardiotocography}~\cite{ayres2000sisporto}; \textsc{Letter}~\cite{frey1991letter}; \textsc{Satimage}; \textsc{Vowels}~\cite{kudo1999multidimensional}; \textsc{Waveform}~\cite{loh2011classification}; \textsc{Wilt}~\cite{campos2016evaluation}; \textsc{Yeast}~\cite{horton1996probabilistic}. 
    Further information about these datasets such as number of samples and description of normal and anomaly classes, as well as several other details can be found in \cite{han2022adbench}.

	\subsubsection{Compared Detectors}
	The following heuristic detectors from the AD literature are compared (presented in chronological order):
	\begin{enumerate}
		\item \textit{LOF}: Local outlier factor---a local neighborhood and density-based anomaly detector~\cite{breunig2000lof}. Five nearest neighbors and Minkowski distance are used.
		\item \textit{IForest}: Isolation Forest, which uses random hyper-rectangles to detect isolated data~\cite{liu2008isolation}. 1000 random trees are used in each computation.
		\item \textit{COPOD}: Copula-based outlier detection---a heuristic method to determine data extremeness via tail probabilities~\cite{li2020copod}. This method has no hyperparameters.\footnote{Despite its sound theoretical basis, we classify COPOD as heuristic since the algorithm described in \cite{li2020copod} does not use a copula.}
    \end{enumerate}
    The following probabilistic approaches are included in the comparison:
        \begin{enumerate}
        \item \textit{EM}: Expectation maximization with a Gaussian and a uniform component, and with discrete labels. This particular setup of expectation maximization is conceptually similar to an unconstrained version of our method where the labels are updated in a greedy classical MLE sense. 
        \item \textit{MLE}: The same as Alg.~\ref{alg:ours}, except that it always returns the unconstrained solution. Scores are computed for each dimension and then aggregated via the average over all dimensions. $n_B=5$ random initializations are used. The AFR $[a;b]$ is inferred from the 0.24 and 0.75 quantiles, respectively. Significance level $\alpha$ is set to $0.05$.
		\item \textit{CAMLE}: Our method as presented in Alg.~\ref{alg:ours}. Scores are computed for each dimension and then aggregated via the average over all dimensions.
		$n_B=5$ random initializations are used. The AFR $[a;b]$ is inferred from the 0.24 and 0.75 quantiles, respectively.
		Significance level $\alpha$ is set to $0.05$.
	\end{enumerate}
    For MLE and CAMLE, we choose the lower boundary of the AFR to be the 0.24 quantile instead of the more common 0.25 quantile to decrease the chance that $\bar{x}$ coincides with $\frac{a+b}{2}$, which has no constrained solution.
    
    The following deep-learning AD methods are compared:
    \begin{enumerate}
        \item \textit{AutoEnc}: An autoencoder network~\cite{le1987modeles} whose reconstruction error is used as anomaly score. The hidden layers have 1\textrightarrow64\textrightarrow32\textrightarrow32\textrightarrow64\textrightarrow1 neurons and use $\tanh$ as activation function, and the network was trained using Adam with a dropout rate of 20\%.
        \item \textit{Deep-SVDD}: SVDD~\cite{tax2004support} applied to the latent representation of an autoencoder~\cite{ruff2018deep}. The hidden  and output layers use ReLU and sigmoid activation functions, respectively, and the network was trained using Adam with a dropout rate of 20\%.
        
    \end{enumerate}
    
	
	

	\subsubsection{Evaluation Setup}
	The predictions of every detector are collected from every evaluation dataset.
	These predictions are then compared with the true anomaly labels using the area under curve of the receiver-operator characteristic (AUC-ROC), which is an estimate of the probability that a detector will rank a randomly chosen anomaly higher than a randomly chosen non-anomaly.
    It is computed via $\text{AUC-ROC}=\int_{-\infty}^{\infty}\text{TPR}(\tau)\text{FPR}'(\tau) d\tau$ where TPR is the true positive rate $\frac{\text{TP}}{\text{TP}+\text{FP}}$, FPR the false positive rate $\frac{\text{FP}}{\text{FP}+\text{TN}}$, and $\tau$ is a threshold that separates the scores of anomalies and non-anomalies.
    We also track each method's total execution runtime to assess the practical efficiency of all implementations.

	\subsubsection{Results}
	The results of the conducted experiment are shown in Table~\ref{tab:benchmar_results}.
    The best average rank among heuristic methods was achieved by IForest; among probabilistic methods by CAMLE; among deep-learning methods by AutoEnc.
    The best average rank in total was achieved by IForest, with CAMLE and LOF sharing the second place.
    The unconstrained MLE achieved a worse average rank than the constrained MLE. 
	The fastest method was EM, followed by LOF.
    The deep-learning based methods AutoEnc and DeepSVDD had the longest and second longest execution runtime, respectively.

    \smallskip
 
	Effectively, these results highlight that an AFR-based AD method is capable of outperforming several widely-used AD methods on selected benchmark datasets even when no AFR is known and has to be estimated.

	\subsection{RQ2.2: Unsupervised Sensitivity Analysis}
    \subsubsection{Motivation}
    It remains unclear how sensitive CAMLE is to changes in the AFR estimation procedure.
    In particular, it is interesting whether CAMLE can retain its high performance across many different AFRs, or if only a small number of specific AFRs produce high results.
    
    \subsubsection{Setup}
    For this analysis, we use the \textsc{Annthyroid} dataset. For other datasets, please refer to the Appendix.
    We initialize 1100 different AFRs (per dimension of the dataset) using the following procedure:
    \begin{enumerate}
    \item Eleven equidistant offsets $\Delta$ are selected from the range [0.01;0.99], i.e, 0.010, 0.108,...,0.892, 0.990.
    \item For each $\Delta$, 100 equidistant $a,b$ pairs are selected as empirical quantiles, ranging from 0.0001 to 0.999.
    \end{enumerate}
    We aggregate CAMLE's predictions scores on \textsc{Annthyroid} for each $\Delta$ using the average over all $a,b$ pairs.

    \subsubsection{Results}
    The results of this analysis are depicted in Figure~\ref{fig:ablation}. CAMLE retained a score higher than the second highest-scoring method (IForest) for $0.108 \le \Delta < 0.892$. Outside this region, CAMLE's performance dropped below an AUC-ROC of 0.82.

    \smallskip
    
    These results suggest that CAMLE is not sensitive to the AFR estimation procedure and that its performance does not drop significantly.

    \begin{figure}
        \centering
        \includegraphics[scale=0.6]{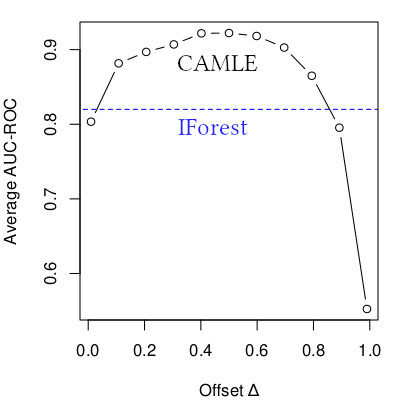}
        \caption{Results of the sensitivity analysis. CAMLE was computed with 1100 different AFRs per dimension of dataset \textsc{Annthyroid}. CAMLE retaints the highest scores among all compared detectors for AFR lengths $0.108\le\Delta<0.892$.}
        \label{fig:ablation}
    \end{figure}

    \subsection{RQ3: Semi-Supervised Benchmark Test}
    \subsubsection{Motivation}
    If a valid AFR is given, this is a different setup and needs to be investigated separately.
    The simulation study alone is not sufficient since it does not cover CAMLE's empirical performance, and RQ2 did not cover the case that an AFR is given.

    \subsubsection{Dataset}
    To the best of our knowledge, there is no publicly available benchmark dataset with a known AFR.
    Hence, we collect such a dataset, and refer to it as \textsc{Office}.
	We collected \textsc{Office} from an office worker's time sheets and interviewed the worker to determine anomaly labels.
	The worker's personal details as well as all company information were removed to preserve anonymity.
	\textsc{Office} consists of 365 data points---1 per day for a full year, including weekends with 0 expected working time. 
    Each data point is the number of minutes by which the actual working time deviates from this day's expected working time.
	AFR $R$ is $\pm$ 29 minutes since deviations shorter than 30 minutes are considered normal by the management regardless of what caused them.
	There are 298 normal data inside $R$, 54 normal data outside $R$, and 13 labeled anomalies.
	We publish this dataset along with this article.

    \subsubsection{Setup}
    We compare CAMLE with semi-supervised methods PReNet~\cite{pang2023deep} and Overlap~\cite{jiang2023anomaly}.
    For both semi-supervised methods, we re-use the original implementations.
    To analyse different setups, PReNet and Overlap receive 10\%, 20\% and 50\% of \textsc{Office} as labeled training data, respectively.
    CAMLE receives AFR=$[a;b]=[-29;29]$ as input.
    We compute the AUC-ROC score for each setting.

    \subsubsection{Results}
    The results of this experiment are depicted in Table~\ref{tab:semi}.
    CAMLE obtains higher scores than PReNet and Overlap at all training sizes. The scores of PReNEt and Overlap increase as the training size increases.

    \smallskip

    These results suggest that an AFR-based method can outperform semi-supervised methods if a valid AFR is provided.

    \begin{table}[!ht]
        \centering
        \caption{AUC-ROC scores on \textsc{Office}. As the training size increases, semi-supervised  methods PReNet and Overlap achieve better scores. CAMLE achieves the highest scores with just AFR $[a;b]$ provided.}
        \label{tab:semi}
        \begin{tabular}{l|ccc|ccc|c}
             \textbf{Method} & \multicolumn{3}{c|}{\textbf{PReNet}}&\multicolumn{3}{c|}{\textbf{Overlap}}&\textbf{CAMLE}\\
             \midrule
             Train. size & 10\% & 20\% & 50\% & 10\% & 20\% & 50\% & AFR\\
             \midrule
             AUC-ROC & 0.66 & 0.92 & 0.96 & 0.66 &0.66& 0.72 & \textbf{0.97}
        \end{tabular}
        
    \end{table}

    \section{Discussion}
    \subsection{Implications}
    \subsubsection{RQ1}
    The simulation study confirms that the constrained MLE is superior over the unconstrained MLE at estimating $f_N$'s parameters $\mu,\sigma$ when a valid AFR is known and $\hat{\bm{B}}\neq \bm{B}$.
    It is evident that the constrained estimation of $p$ in this setting must be inferior to the unconstrained estimation to compensate for the other improved estimates.
    Otherwise we could obtain a solution with higher likelihood, given the AFR, which is not possible since the classical MLE already maximizes the likelihood.

\subsubsection{RQ2}
    The unsupervised benchmark test suggests that CAMLE is superior over its unconstrained variant MLE on the majority of the evaluation datasets.
    Moreover, CAMLE is capable of outperforming several baseline methods from the literature, even if CAMLE's underlying assumptions are not fulfilled by the benchmark datasets (e.g. no AFR known, non-Gaussianity, high-dimensional datasets).
    The sensitivity analysis confirms that a reasonable AFR can be estimated if none is given, which is useful for practical data mining.
    

    \subsubsection{RQ3:} The semi-supervised benchmark test suggests that AFRs are more efficient than training labels at encoding information about the anomaly class.
    The semi-supervised competitors PReNet and Overlap seem to require a large number of training labels to become competitive with CAMLE.
    It seems likely that these methods would eventually outperform CAMLE if the training size is increased beyond 50\%.
    However, in a real-world AD task, it is not realistic that so many labeled training instances are available. 
    \subsection{Limitations}
    The results of our experiments are limited to the small selection of included datasets, as well as to the considered parameter ranges.
    It was assumed that all data within a dataset are independent and identically distributed.
    Dependent data types such as \textit{time series}, \textit{graphs}, \textit{spatial}, \textit{text}, and \textit{images} are not covered.
    The aggregation-based extension of Alg.~\ref{alg:ours} to multi-dimensional datasets used for RQ2 assumes that there is no dependency between dimensions.
    This assumption is violated on many datasets.

    \section{Conclusion and Outlook}
	We introduced the novel concept of anomaly-free regions (AFR) to
improve anomaly detection.
 AFRs and the associated constraints are a fundamental addition to classical anomaly detection.
    AFRs allow one to encode available information about the anomaly class, and can be soundly estimated if no such information is available.
    The resulting constrained form of anomaly detection improves upon the classical unconstrained anomaly detection, and hence is a promising direction for a new class of detectors, e.g., constrained one-class classifiers.

   Despite this paper's focus on anomaly detection, we believe the theoretical contributions will have an impact on other research fields in the future.
    For example, our theoretical contributions can be applied in classification, as the AFR-concept is equivalent to a region that does not contain one (out of several) classes.
    The same generalization can be made for other tasks such as clustering and novelty detection.


	

	\bibliographystyle{IEEEtran}
	\bibliography{references_kdd24}
	\appendix

    \section{Proofs}
    \subsection{Proof of Theorem~\ref{thm:p}}
    Let us restate Theorem~\ref{thm:p}.

    \addtocounter{theorem}{-2}

    \begin{theorem}\label{thm:pr}
	For $\lambda_1=0$ and $\lambda_2\neq 0$, the equation system~\eqref{eq:KKT} has the following solution for $p$
	\begin{equation}\tag{\ref{eq:p_mle_constrained}}
		p=\frac{1}{n-\Omega}\sum_{t=1}^{n}\B_t
	\end{equation}
	where $\Omega<n-\sum_{t=1}^{n}\B_t$ is the density surplus gradient, which is given by
	\begin{equation}\tag{\ref{eq:omega}}
		\Omega=
  \frac{\frac{\partial}{\partial \theta^{(1)}}\log\mathcal{L}}{\frac{\partial}{\partial \theta^{(1)}}\log I}I = \mathellipsis = \frac{\frac{\partial}{\partial \theta^{(q)}}\log\mathcal{L}}{\frac{\partial}{\partial \theta^{(q)}}\log I}I
	\end{equation}
 and where (either of the) derivatives of the right-hand side are evaluated at an MLE of $\theta$.
\end{theorem}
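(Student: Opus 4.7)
The plan is to exploit the KKT stationarity equations~\eqref{eq:dp}--\eqref{eq:dtq} in the regime $\lambda_1=0,\ \lambda_2\neq 0$, eliminate the Lagrange multiplier $\lambda_2$ between the $p$-equation and any one of the $\theta^{(k)}$-equations, and then close the resulting identity using the standard Bernoulli score for the mixing factor in~\eqref{eq:likelihood}. Introducing the abbreviation $M := 2\lambda_2(1-(1-p)I-\bar{P})$, equation~\eqref{eq:dp} reduces to $\partial\mathcal{L}/\partial p = MI$ and each of \eqref{eq:dt1}--\eqref{eq:dtq} reduces to $\partial\mathcal{L}/\partial\theta^{(k)} = -M(1-p)\,\partial I/\partial\theta^{(k)}$; dividing the former by the latter cancels $M$ and yields, for every $k\in\{1,\dots,q\}$, a single scalar identity linking $\partial\mathcal{L}/\partial p$ to the ratio $(\partial\mathcal{L}/\partial\theta^{(k)})/(\partial I/\partial\theta^{(k)})$.

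Next I would plug in the direct computation $\partial\log\mathcal{L}/\partial p = (s_B-pn)/(p(1-p))$, where $s_B=\sum_t\hat{B}_t$, obtained by differentiating~\eqref{eq:likelihood} in $p$. Converting the $k$-indexed ratio to logarithmic derivatives via $\partial\mathcal{L}/\partial\theta^{(k)}=\mathcal{L}\,\partial\log\mathcal{L}/\partial\theta^{(k)}$ and $\partial I/\partial\theta^{(k)}=I\,\partial\log I/\partial\theta^{(k)}$ turns it into the expression that defines $\Omega$ in~\eqref{eq:omega}. Canceling the common factors $\mathcal{L}$ and $1-p$ leaves the linear relation $pn-s_B = p\,\Omega$, which solves immediately to~\eqref{eq:p_mle_constrained}. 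The bound $\Omega<n-s_B$ is then merely the valid-parameter requirement $p<1$ re-expressed through the solution; strict inequality is automatic in case~3, since $p=1$ would activate the first KKT constraint and put us in case~4 instead.

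The main obstacle, and the step that needs the most care, is the well-definedness of $\Omega$. The elimination of $\lambda_2$ produces one scalar identity per index $k$, so a priori one obtains $q$ possibly distinct values for the ratio $(\partial\log\mathcal{L}/\partial\theta^{(k)})/(\partial\log I/\partial\theta^{(k)})$. However, since $\partial\mathcal{L}/\partial p$ on the left is independent of $k$, all $q$ right-hand sides must coincide, which is precisely the geometric statement that at the KKT point the gradients of $\log\mathcal{L}$ and $\log I$ with respect to $\bm{\theta}$ are parallel. I would record this collinearity explicitly as a small lemma before writing down the chain of equalities in~\eqref{eq:omega}, so that the ``either of the derivatives'' clause in the theorem is unambiguous. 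The remark about evaluating at an MLE of $\bm{\theta}$ is then justified by the fact that all of the manipulations above use only Lagrangian stationarity, which holds at the constrained MLE by construction.
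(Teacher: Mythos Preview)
Your proposal is correct and follows essentially the same approach as the paper's own proof: eliminate $\lambda_2$ by combining the stationarity condition in $p$ (equation~\eqref{eq:dp} with $\lambda_1=0$) with any one of the $\theta^{(k)}$-conditions~\eqref{eq:dt1}--\eqref{eq:dtq}, pass to logarithmic derivatives, recognize the resulting ratio as $\Omega$, and solve the Bernoulli score equation $\partial\ell/\partial p+\Omega/(1-p)=0$ for $p$. Your shorthand $M$ and the explicit collinearity remark justifying the chain of equalities in~\eqref{eq:omega} make the exposition slightly tidier than the paper's, but the substance is identical.
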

    
    \begin{proof}
	Since $\lambda_2\neq 0$ we can solve \eqref{eq:dt1} for $\lambda_2$ which gives
	\begin{equation}\label{eq:lambda_2}
		\lambda_2 = -\frac{\partial\mathcal{L}}{\partial \theta^{(1)}} \Big/ \left(2(1-p)\left(1-(1-p)I-\bar{P}\right)\frac{\partial}{\partial \theta^{(1)}} I\right).
	\end{equation}
	Plugging \eqref{eq:lambda_2} into \eqref{eq:dp} gives
	\begin{align*}
		\frac{\partial \mathcal{L}}{\partial p} + 2I\left(1-(1-p)I-\bar{P}\right)\left(\frac{\partial\mathcal{L}}{\partial \theta^{(1)}}\right)\Big/\\ \left(2(1-p)\left(1-(1-p)I-\bar{P}\right)\frac{\partial}{\partial \theta^{(1)}} I\right)=0
	\end{align*}
	since $\lambda_1=0$.
	This can be further simplified to 
	\begin{equation}\label{eq:intermediate_step_proof_p}
		\frac{\partial \mathcal{L}}{\partial p} + \frac{\partial\mathcal{L}}{\partial \theta^{(1)}}\frac{1}{1-p}\frac{I}{\frac{\partial}{\partial \theta^{(1)}}I} = 0.
	\end{equation}
	Now the rule $\frac{d}{dx}f(x) = f(x) \frac{d}{dx}\log f(x)$ and the shorthand notation $\log\mathcal{L}=\ell$ allows to combine \eqref{eq:omega}   and \eqref{eq:intermediate_step_proof_p} as
	\begin{equation*}
		\frac{\partial\ell}{\partial p} + \frac{1}{1-p}\frac{\partial \ell}{\partial \theta^{(1)}}\frac{1}{\frac{\partial}{\partial \theta^{(1)}} \log I} = \frac{\partial\ell}{\partial p} + \frac{\Omega}{1-p}=0.
	\end{equation*}
	Deriving the log likelihood with respect to $p$ gives
	\begin{equation*}
		\frac{\sum_{t=1}^{n}\B_t}{p} - \frac{n-\sum_{t=1}^{n}\B_t}{1-p} + \frac{\Omega}{1-p}=0
	\end{equation*}
	which can be solved for $p$ to obtain \eqref{eq:p_mle_constrained}.
	Note that $\theta^{(1)}$ was chosen arbitrarily, and that the same $p$ must be obtained for all $\theta^{(j)}$, $j\in 1,\mathellipsis,q$.
	Hence, \eqref{eq:omega} must also hold. 
\end{proof}
    \subsection{Proof of Theorem~\ref{thm:sigma2}}
    Let us restate Theorem~\ref{thm:sigma2}.
    \begin{theorem}\label{thm:sigma2r}
	For $\lambda_2\neq0$, Gaussian $f_N=\mathcal{N}(\mu,\sigma^2)$ and $R=[a,b]$, equation system~\eqref{eq:KKT} has the following solution for $\sigma^2$
	
	\begin{equation}\tag{\ref{eq:sigma2_solution}}
		\sigma^2=\frac{\displaystyle \beta m}{\displaystyle m+\beta W_r\left(\gamma\right)}
	\end{equation}
\end{theorem}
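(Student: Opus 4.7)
The plan is to specialize the KKT system \eqref{eq:KKT} to Gaussian $f_N$ with interval AFR $R=[a,b]$ and reduce the resulting transcendental condition to the defining equation $u e^u + r u = \gamma$ of the $r$-Lambert function; once this reduction is achieved, \eqref{eq:sigma2_solution} follows by a direct rearrangement.

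First, I would compute the ingredients in closed form. The log-likelihood score with respect to $\sigma^2$ can be written in terms of the label-weighted moments $\bar{x}$ and $\overline{x^2}$ from \eqref{eq:x_bar}-\eqref{eq:x2_bar}, while $I$ equals $\Phi((b-\mu)/\sigma) - \Phi((a-\mu)/\sigma)$, so $\partial I / \partial \sigma^2$ reduces to an expression involving the standard normal pdf values $\phi_a, \phi_b$ at the truncation points. The pivotal identity is $\phi_a = \phi_b \exp(-m/\sigma^2)$, which is built into the definition \eqref{eq:m} of $m$ and is the sole source of the exponential that appears in the $r$-Lambert equation.

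Next, I would substitute these pieces into the KKT equation \eqref{eq:dt2} with $\theta^{(2)}=\sigma^2$ and eliminate both $p$ and the Lagrange multiplier $\lambda_2$ using Theorem~\ref{thm:p} together with the active consistency equality $(1-p)I = 1 - \bar P \mp w$. The quantities $\alpha$ and $\beta$ from \eqref{eq:alpha}-\eqref{eq:beta} then arise naturally as the coefficients of $\phi_a$ and $\phi_b$ once the derivatives of $I$ are expanded and the label-weighted sample-moment contributions are aggregated.

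Finally, with the substitution $u = m/\sigma^2 - m/\beta$, chosen so that $u = W_r(\gamma)$ is exactly equivalent to \eqref{eq:sigma2_solution}, I would verify that the collected equation takes the form $u e^u + r u = \gamma$ with $r = -(\alpha/\beta)e^{-m/\beta}$ and $\gamma = ((\alpha-\beta)/\beta^2)\, m\, e^{-m/\beta}$; the defining property of the $r$-Lambert function then yields the claim. The main obstacle is the algebraic bookkeeping: the $\phi_a$ and $\phi_b$ factors, the sample-moment terms, and the $(1-p)$ factor must combine so that a common $e^{-m/\beta}$ factors out cleanly and the residual coefficients reproduce $r$ and $\gamma$ exactly. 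Pre-committing to the substitution $u = m/\sigma^2 - m/\beta$ before starting the manipulation is what makes the $r$-Lambert pattern visible; without this choice, the equation looks like a generic transcendental mess.
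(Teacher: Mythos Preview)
Your outline lands on the right transcendental equation and the substitution $u=m/\sigma^2-m/\beta$ is exactly the paper's $z=1/\sigma^2$ pre-shifted so that $u=W_r(\gamma)$ directly, so the endgame is the same. The elimination step, however, is a detour compared with the paper. The paper never touches the active consistency equality $(1-p)I=1-\bar P\mp w$; notice that $\bar P$ and $w$ do not appear in \eqref{eq:sigma2_solution} at all. Instead it uses only the second half of Theorem~\ref{thm:p}, the equality of density-surplus gradients in \eqref{eq:omega}, which unpacked reads
\[
\frac{\partial\ell/\partial\mu}{\partial I/\partial\mu}\;=\;\frac{\partial\ell/\partial\sigma}{\partial I/\partial\sigma}.
\]
This is what you get by dividing \eqref{eq:dt1} by \eqref{eq:dt2}: the common factor $2\lambda_2(1-p)\bigl(1-(1-p)I-\bar P\bigr)$ cancels, and with it $p$, $\lambda_2$, $\bar P$, and even $I$ itself, leaving one equation in $\mu,\sigma^2$ alone. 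Your plan to work from the single stationarity condition \eqref{eq:dt2} and plug in the active constraint still leaves $\lambda_2$ unresolved; to remove it you are forced back to \eqref{eq:dt1}, at which point the constraint value was never needed. From the $\Omega$-identity the paper obtains $\sigma^2=\overline{x^2}-\mu\bar x+(\mu-\bar x)\mathcal{E}$ with $\mathcal{E}=(ae_a-be_b)/(e_a-e_b)$, and this is where $\alpha,\beta$ emerge (not as coefficients of $\phi_a,\phi_b$ in $\partial I/\partial\sigma^2$, but from pairing the sample moments with the endpoints $a,b$). So: drop the active-constraint substitution, state explicitly that you are equating the $\mu$- and $\sigma$-versions of $\Omega$, and the rest of your sketch goes through.
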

    \begin{proof}
	In the following we abbreviate $e_a=e^{-\frac{(a-\mu)^2}{2\sigma^2}}$, $e_b=e^{-\frac{(b-\mu)^2}{2\sigma^2}}$, $C_a=a-\mu$ and $C_b=b-\mu$.
	We start by stating the derivatives of $\ell$ and $I$ with respect to $\mu$ and $\sigma$:
	\begin{equation}\label{eq:derivatives}
		\begin{aligned}
			\frac{\partial \ell}{\partial \mu}&=\sum_{t=1}^{n}(1-\B_t) \frac{x_t-\mu}{\sigma^2}\\
			\frac{\partial \ell}{\partial \sigma}&=\sum_{t=1}^{n}(1-\B_t) \frac{(x_t-\mu)^2-\sigma^2}{\sigma^3}\\
			\frac{\partial I}{\partial \mu} &=\int_{a}^{b}\frac{\partial}{\partial \mu} \frac{1}{\sqrt{2\pi}\sigma}e^{-\frac{(x-\mu)^2}{2\sigma^2}}= \frac{1}{\sqrt{2\pi}\sigma}\left(e_a-e_b\right)\\
			\frac{\partial I}{\partial \sigma} &=\int_{a}^{b}\frac{\partial}{\partial \sigma} \frac{1}{\sqrt{2\pi}\sigma}e^{-\frac{(x-\mu)^2}{2\sigma^2}}= \frac{1}{\sqrt{2\pi}\sigma^2}\left(C_a e_a-C_b e_b\right)		
		\end{aligned}
	\end{equation}	
	Plugging \eqref{eq:derivatives} into \eqref{eq:omega} and dividing both sides by $I$ gives
	\begin{align*}
		&\sum_{t=1}^{n}(1-\B_t) \frac{x_t-\mu}{\sigma^2} \Big / \left(\frac{1}{\sqrt{2\pi}\sigma}\left(e_a-e_b\right)\right) \\&= \sum_{t=1}^{n}(1-\B_t) \frac{(x_t-\mu)^2-\sigma^2}{\sigma^3} \Big / \left(\frac{1}{\sqrt{2\pi}\sigma^2}\left(C_a e_a-C_b e_b\right)	\right).
	\end{align*}
	By further abbreviating $s_B = \sum_{t=1}^n \B_t$, $s_x = \sum_{t=1}^n (1-\B_t)x_t$, $s_{x^2}=\sum_{t=1}^{n}(1-\B_t)x_t^2$ and eliminating duplicate terms, this statement can be simplified to
	\begin{align}
		\frac{s_x-(n-s_B)\mu}{e_a-e_b} = \frac{s_{x^2}-2\mu s_x + (n-s_B)(\mu^2-\sigma^2)}{C_a e_a-C_b e_b}\nonumber\\
		\frac{C_a e_a-C_b e_b}{e_a-e_b} = \frac{s_{x^2}-2\mu s_x + (n-s_B)(\mu^2-\sigma^2)}{s_x-(n-s_B)\mu}\label{eq:two_fracs}.
	\end{align}
	The left-hand side of \eqref{eq:two_fracs} can be rewritten as
	\begin{equation*}
		\frac{C_a e_a-C_b e_b}{e_a-e_b} = \frac{a e_a - b e_b}{e_a - e_b} -  \mu = \mathcal{E}-\mu.
	\end{equation*}
	Inserting this into \eqref{eq:two_fracs} and multiplying with the right-hand side denominator gives
	\begin{equation}\label{eq:last_before_bar}
		(\mathcal{E}-\mu)\left(s_x-(n-s_B)\mu\right) = s_{x^2}-2\mu s_x + (n-s_B)(\mu^2-\sigma^2).
	\end{equation}
	If we study \eqref{eq:last_before_bar} closely, it becomes apparent that dividing both sides by $(n-s_B)$ converts several terms to the averages defined in \eqref{eq:x_bar} and \eqref{eq:x2_bar}
	\begin{align}
		(\mathcal{E}-\mu)(\bar{x}-\mu)=\bar{x^2}-2\mu\bar{x}+\mu^2-\sigma^2\nonumber\\
		\sigma^2 = \bar{x^2} - \mu \bar{x} + (\mu - \bar{x})\mathcal{E}.\label{eq:sigma2_simple}
	\end{align}
	Note that \eqref{eq:sigma2_simple} coincides with the unconstrained Gaussian MLE if $\mu=\bar{x}$ or $\mathcal{E}=0$.
	While \eqref{eq:sigma2_simple} is easy to understand, it is not a solution for $\sigma^2$ since the term $\mathcal{E}$ contains $\sigma^2$.
	In the following steps, we rearrange \eqref{eq:sigma2_simple} to the standard form for the $r$-Lampert function given in \cite{mezHo2017generalization}.
	
	We simplify $\mathcal{E}$ using \eqref{eq:m}:
	\begin{equation}\label{eq:e_term}
		\mathcal{E} = \frac{a e_a - b e_b}{e_a - e_b} = b + \frac{(a-b)e_a}{e_a-e_b} = b+ \frac{a-b}{1-e_b /e_a} = b + \frac{a-b}{1-e^{\frac{m}{\sigma^2}}}
	\end{equation}
	Now, let $z=\frac{1}{\sigma^2}$ and plugging \eqref{eq:beta} and \eqref{eq:e_term} into \eqref{eq:sigma2_simple} gives
	\begin{equation*}
		\frac{1}{z} = \beta + (\mu-\bar{x})\frac{a-b}{1-e^{mz}}.
	\end{equation*}
	Multiplying with the denominators and plugging in \eqref{eq:alpha} gives
	\begin{align}
		1-e^{mz} &= z\left(\alpha-\beta e^{mz}\right)\nonumber\\
		1-\alpha z &= e^{mz}\left(1-\beta z\right)\nonumber\\
		\frac{\alpha}{\beta}\left(z-\frac{1}{\alpha}\right)&=e^{mz}\left(z-\frac{1}{\beta}\right)\nonumber\\
        \frac{\alpha}{\beta}&=e^{mz}\left(z-\frac{1}{\beta}\right) \Big/ \left(z-\frac{1}{\alpha}\right)\label{eq:almost_there}
	\end{align}
	Equation~\eqref{eq:almost_there} matches exactly the form of \cite[Theorem 3]{mezHo2017generalization} and thus we have
	\begin{equation*}
		z = \frac{1}{\beta}+\frac{1}{m}W_{-\frac{\alpha}{\beta}e^{-\frac{m}{\beta}}}\left(m\left(\frac{\alpha-\beta}{\beta^2}\right)e^{-\frac{m}{\beta}}\right).
	\end{equation*}
	Lastly, we replace $z$ by $\frac{1}{\sigma^2}$, and multiply with $\beta m$, and the proof is complete.
\end{proof}


\section{Solution for the Uniform Distribution}
In the main matter, we focused our theoretical exposition of the estimation of $f_N$'s parameters on the Gaussian distribution.
Here, we describe the (somewhat disappointing) solution for uniform $f_N$.
\begin{proposition}
For $\lambda_1\neq0,\lambda_2\neq0$, Uniform $f_N=\mathcal{U}(k,l)$ and $R=[a,b]$, equation system~\eqref{eq:KKT} has no constrained solution. The constraint is always fulfilled.
\end{proposition}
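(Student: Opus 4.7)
The plan is to substitute the uniform density $f_N(x;k,l) = 1/(l-k)$ on $[k,l]$ into the KKT system and show that Case 4 ($\lambda_1\neq 0, \lambda_2\neq 0$) cannot produce a valid interior solution, while separately arguing that the unconstrained uniform MLE already fulfills the consistency constraint so Case 4 is never triggered in the first place. First I would record the required derivatives: since the normal part of $\ell$ depends on $(k,l)$ only through $-(n-s_B)\log(l-k)$, one has $\partial_k \ell = (n-s_B)/(l-k) = -\partial_l \ell$, and $I = (b-a)/(l-k)$ gives $\partial_k I = I/(l-k) = -\partial_l I$. This anti-symmetry makes equations \eqref{eq:dt1} and \eqref{eq:dt2} collapse into a single stationarity condition (after clearing $(l-k)$):
\begin{equation*}
(n-s_B) + 2\lambda_2(1-p)\bigl(1-(1-p)I-\bar{P}\bigr) I = 0.
\end{equation*}

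Next I would exploit the forced slackness of Case 4. Condition \eqref{eq:comp_slack_1} pins $p\in\{0,1\}$. The branch $p=1$ reduces the above to $n-s_B = 0$, i.e., no normal data are present and there are no uniform parameters to fit, a degenerate setting. The branch $p=0$, combined with \eqref{eq:comp_slack_2} (so $(1-I-\bar{P})^2 = w^2$) and $\lambda_2>0$, forces $I = 1-\bar{P}+w$, equivalently $l-k = (b-a)/(1-\bar{P}+w)$, which only fixes the width and leaves $k$ and $l$ individually undetermined. Crucially, the uniform log-likelihood is strictly monotone in $l-k$, so its maximizer lies at the data-induced boundary $\hat{k}=\min_{t:\B_t=0} x_t$, $\hat{l}=\max_{t:\B_t=0} x_t$, and never at a stationary interior point where $\partial_k \ell$ vanishes. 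Hence the stationarity equation above can never be satisfied by a genuine uniform MLE, which establishes that Case 4 has no constrained solution.

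For the second claim, I would note that Algorithm~\ref{alg:ours} labels every $x_t\in R$ as normal, so at the unconstrained MLE the quantities $\hat{I} = (b-a)/(\hat{l}-\hat{k})$ and $n_R/(n-s_B)$ are both consistent plug-in estimators of the model probability $(b-a)/(l-k)$ under the uniform assumption, and thus $(1-\hat{p})\hat{I}$ and $1-\hat{P}$ differ only by sampling fluctuation. The Wilson half-width $w$ in \eqref{eq:w} is calibrated at significance $\alpha$ precisely to absorb this fluctuation, so the check in line~8 of Algorithm~\ref{alg:ours} is not triggered and the constraint is automatically satisfied. The main obstacle is upgrading this from an ``at significance $\alpha$'' probabilistic statement to a deterministic one holding for every finite sample and every label draw $\hat{\bm{B}}$; the cleanest route I see is to combine the first-paragraph monotonicity observation (the MLE drives $\hat{I}$ as high as the data allow) with the fact that the Wilson-derived feasible interval for $I$ is, by construction, wide enough to contain this MLE plug-in, so no constrained re-optimization is ever required.
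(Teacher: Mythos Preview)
Your route differs from the paper's, which is far shorter: it simply invokes Theorem~\ref{thm:p}. For uniform $f_N$ one has $I=(b-a)/(l-k)$, and $\partial_l\ell$, $\partial_l I$ are both proportional to powers of $1/(l-k)$, so the density-surplus gradient $\Omega_l$ collapses to a constant depending only on $n$ and $s_B$. Substituting this into \eqref{eq:p_mle_constrained} forces $p=s_B/(n-\Omega_l)=1$, contradicting the working assumption $0<p<1$. That is the entire argument --- no case split on $p\in\{0,1\}$, no monotonicity reasoning, no separate treatment of the second sentence of the proposition.

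Your direct KKT analysis has a genuine gap in the $p=0$ branch. You assert that the stationarity equation ``can never be satisfied by a genuine uniform MLE'', but the first-order system \emph{does} admit a solution there: choosing the sign $1-I-\bar P=-w$ from \eqref{eq:comp_slack_2} yields $\lambda_2=\mathcal{L}(n-s_B)/(2wI)>0$ (you also dropped the factor $\mathcal{L}$ when ``clearing $(l-k)$'', though this does not affect signs). Your monotonicity observation --- that the uniform log-likelihood has no interior stationary point and is maximized at the data-range boundary --- establishes only that this KKT point is not a \emph{maximizer}, which is a global statement, not that system~\eqref{eq:KKT} is infeasible in Case~4. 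To close the argument you would have to carry the computation through \eqref{eq:dp} as well; doing so at $p=0$ (which already forces $s_B=0$ for $\mathcal{L}>0$) gives $\lambda_1=0$, dropping you back to Case~3, where Theorem~\ref{thm:p} applies and produces exactly the paper's contradiction. For the second sentence (``the constraint is always fulfilled'') you rightly flag that your Wilson-interval reasoning is only probabilistic; the paper does not give a separate proof of that clause either, treating it as a consequence of the constrained case being vacuous.
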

\begin{proof}
    We conduct a proof by contradiction.
    Assume that $0<p<1,k,l$ s.t. $\Omega_k=\Omega_l$ as per Thm~\ref{thm:p}.
    Uniform $f_N$ is given by \begin{equation*}
    f_N(x,k,l)=
        \begin{cases}
            \frac{1}{l-k} & k\leq x\leq l\\
            0 & else
        \end{cases}
    \end{equation*}
    Assume without loss of generality that $k\leq x\leq l$.
    The quantities in $\Omega_l$ are given by \begin{align*}
        \frac{\partial\ell}{\partial l} &= \frac{n-s_B}{l-k}\\
        I &= \frac{b-a}{l-k}\\
        \frac{\partial I}{\partial l} &= -\frac{b-a}{(l-k)^2}
    \end{align*}
    and hence, we have \begin{equation*}
        \Omega_l=\frac{\frac{\partial\ell}{\partial l}}{\frac{\partial I}{l}}I=s_B-n.
    \end{equation*}
    Plugging this into \eqref{eq:p_mle_constrained} gives
    \begin{equation*}
    p = \frac{s_B}{n-\Omega_l}=\frac{s_B}{n-n+s_B}=1
    \end{equation*}
    which violates the assumption and completes the desired contradiction.
\end{proof}
\section{Results on development datasets}
For the sake of completeness, we report the results of all methods studied in RQ2.1 on the development datasets.
We did not report these results in the main matter since we used these datasets to test various setups of our method.
However, we believe that the results might nevertheless be interesting to some readers.
The overall ranking of methods on the development datasets is slightly, but not considerable different from the ranking on the evaluation datasets.
The most notable difference is the improved performance of COPOD.
This was expected, since the development datasets include several datasets that were studied in the original COPOD publication~\cite{li2020copod}.

\begin{table*}[!htb]
    \caption{Results for RQ2 on the development dataset. Depicted values in rows starting with dataset names are AUC-ROC scores. We did not report these results in the main matter since we used these datasets to test various setups of our method.
The overall ranking of methods on the development datasets is slightly, but not considerable different from the ranking on the evaluation datasets.
The most notable difference is the improved performance of COPOD.
This was expected, since the development datasets include several datasets that were studied in the original COPOD publication~\cite{li2020copod}.}
    \label{tab:dev}
    \centering
    
    \begin{tabular}{l|ccc|ccc|cc}
    Method class&\multicolumn{3}{c|}{Heuristical} & \multicolumn{3}{c|}{Probabilistic} & \multicolumn{2}{c}{Deep learning}\\
    \textbf{Data / Measure}& \textbf{LOF} & \textbf{IForest }& \textbf{COPOD} & \textbf{EM} & \textbf{MLE }& \textbf{CAMLE}& \textbf{AutoEnc }& \textbf{DeepSVDD}\\
    \toprule
  Glass & 0.64 & 0.79 & 0.76 & 0.75 & 0.72 & 0.76 & 0.73 & 0.68 \\ 
  Heart & 0.55 & 0.60 & 0.65 & 0.73 & 0.73 & 0.74 & 0.44 & 0.53 \\ 
  Hep & 0.49 & 0.69 & 0.80 & 0.37 & 0.36 & 0.40 & 0.76 & 0.56 \\ 
  Ion & 0.90 & 0.85 & 0.79 & 0.77 & 0.76 & 0.77 & 0.83 & 0.61 \\ 
  Lymph & 1.00 & 1.00 & 1.00 & 1.00 & 1.00 & 0.98 & 1.00 & 0.83 \\ 
  Park & 0.47 & 0.50 & 0.51 & 0.48 & 0.42 & 0.42 & 0.25 & 0.61 \\ 
  Pima & 0.59 & 0.67 & 0.65 & 0.59 & 0.56 & 0.56 & 0.63 & 0.46 \\ 
  Stamps& 0.51 & 0.90 & 0.93 & 0.93 & 0.92 & 0.94 & 0.90 & 0.55 \\ 
  Shuttle& 0.64 & 0.87 & 0.82 & 0.90 & 0.91 & 0.74 & 0.94 & 0.67 \\ 
  WPBC & 0.46 & 0.50 & 0.52 & 0.49 & 0.48 & 0.51 & 0.44 & 0.51 \\ 
  \midrule
  ø Rank (per method class) & 2.8 & 1.6 & 1.5 & 1.6 & 2.5 & 1.8 & 1.3 & 1.7\\
  \midrule
  ø Rank (overall) &5.9 &    3.0   &   \multicolumn{1}{c}{2.5}  &    4.0   &   5.2   &   \multicolumn{1}{c}{4.5}   &   4.6  &    5.9 \\
  Total runtime (seconds)  &0.01   &  0.74   &   \multicolumn{1}{c}{0.00}   &   0.01   &   0.02   &   \multicolumn{1}{c}{0.46}  &   12.85  &  2.84 
    \end{tabular}
    
    \end{table*}

\section{Sensitivity study on all other evaluation datasets}
For the sake of completeness, we conduct the same experiment as in RQ2.2 for all other evaluation datasets.
The results of this experiment are depicted in Figure~\ref{fig:all_sensitive}.
In this figure, the dashed blue line indicates CAMLE's performance on this dataset using the default parametrization as reported in Table~\ref{tab:benchmar_results}.
These results indicate high stability of the method with respect to the estimated AFR.
Please note the small variance of the results.

\begin{figure*}[!htb]
    \centering

    \vspace{2cm}
    
    \textbf{Cardio} \hspace{2cm} \textbf{Cardiotocography} \hspace{2.5cm}\textbf{Letter}
    
    \includegraphics[scale=0.4]{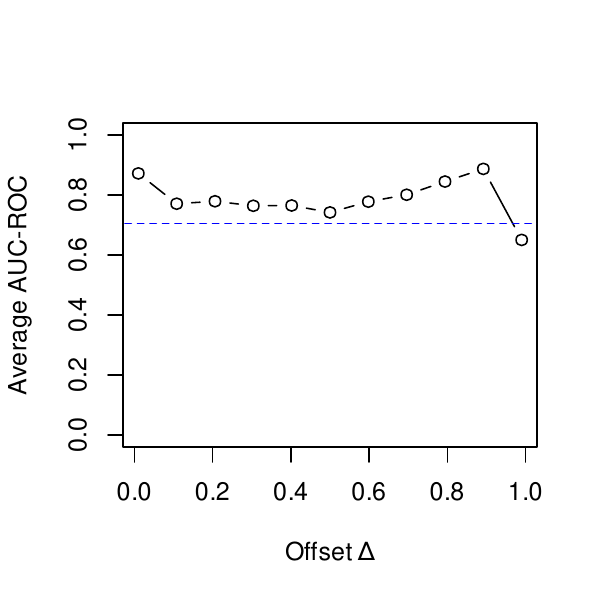}
    \includegraphics[scale=0.4]{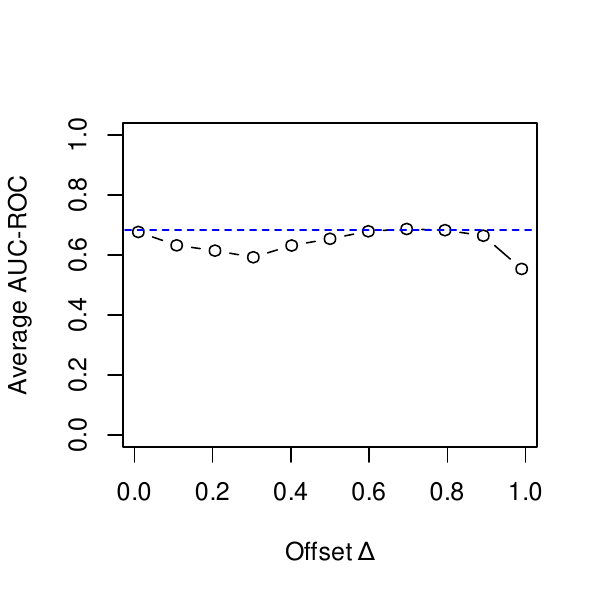}
    \includegraphics[scale=0.4]{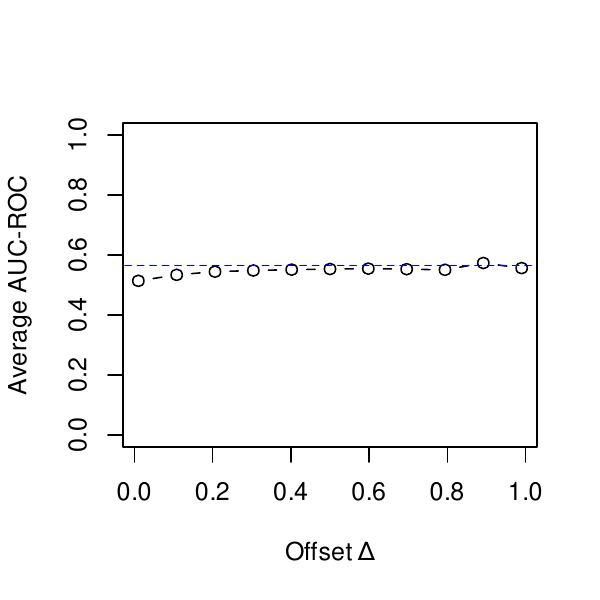}

    \vspace{1cm}

    \textbf{SatImage} \hspace{2.5cm}\textbf{Vowels}\hspace{2.5cm} \textbf{Waveform}
    
    \includegraphics[scale=0.4]{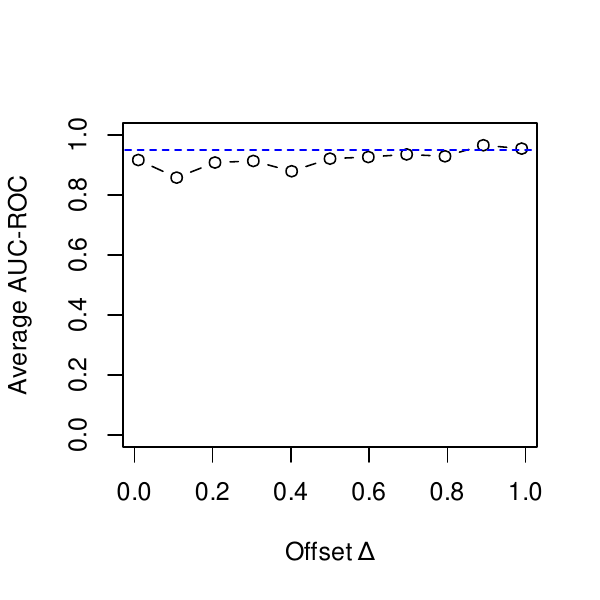}
    \includegraphics[scale=0.4]{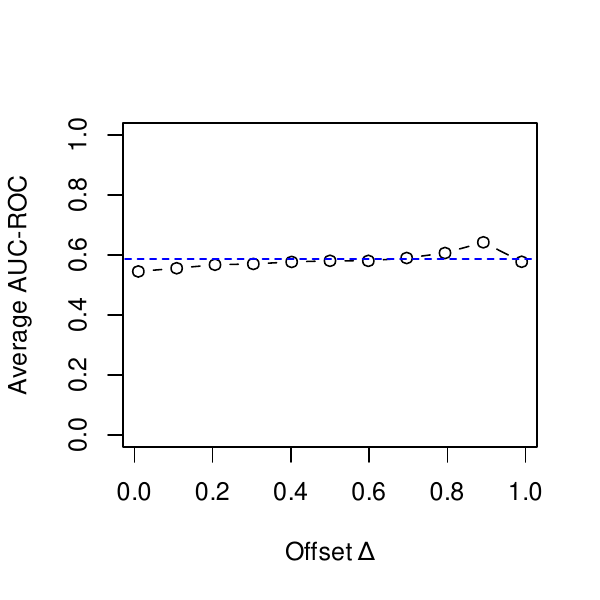}
    \includegraphics[scale=0.4]{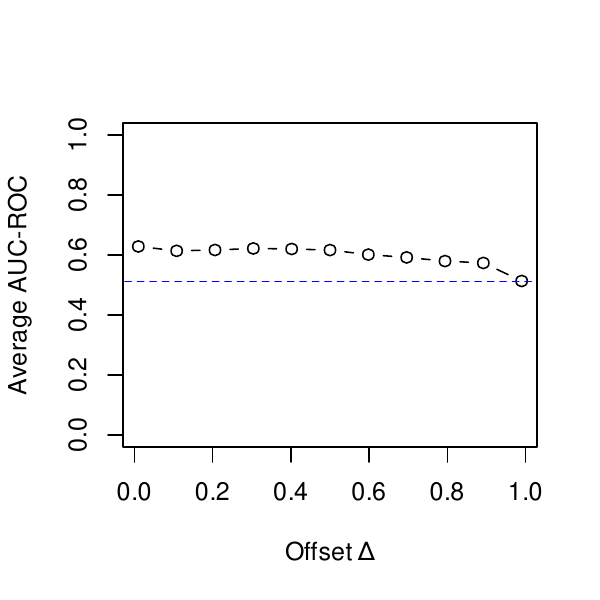}

    \vspace{1cm}
    
    \textbf{Wilt}\hspace{3cm} \textbf{Yeast}
    
    \includegraphics[scale=0.4]{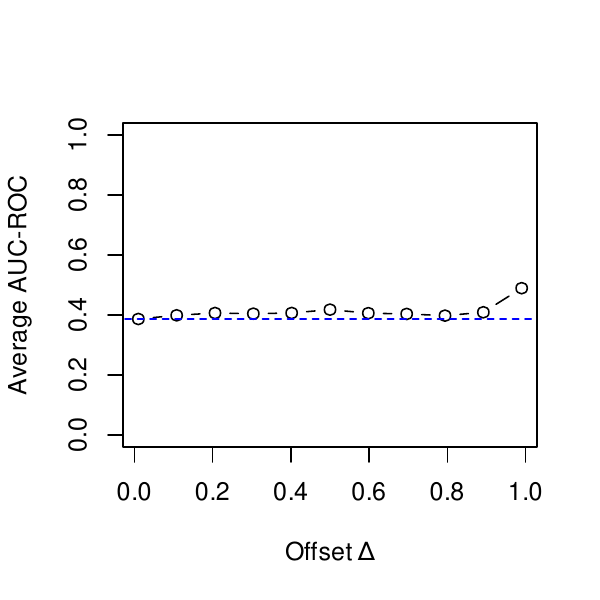}
    \includegraphics[scale=0.4]{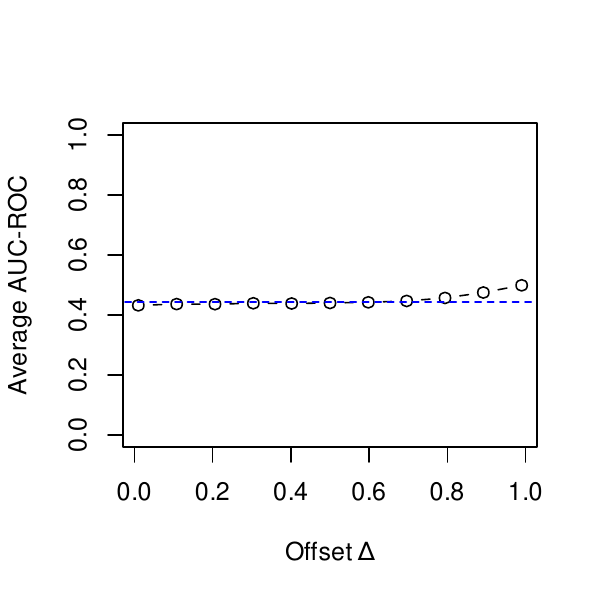}
    \caption{Sensitivity analysis for all other evaluation datasets. The dashed blue line indicates CAMLE's performance on this dataset using the default parametrization as reported in Table~\ref{tab:benchmar_results}. These results indicate high stability of the method with respect to the estimated AFR.
    Please note the small variance of the results.}
    \label{fig:all_sensitive}
\end{figure*}

\section{Visualization of Office Dataset}
A visualization of the newly collected Office dataset is depicted in Figure~\ref{fig:chol}. An anonymous worker's daily work time deviation has natural fluctuation due to flexible work time. Some deviations are anomalous, e.g., because the worker forget to log out when leaving the company.
The region $[-29;29]$ contains no anomalies and is a valid AFR derived from domain knowledge.
\begin{figure*}[!htb]
	\centering
	\includegraphics[scale=0.35]{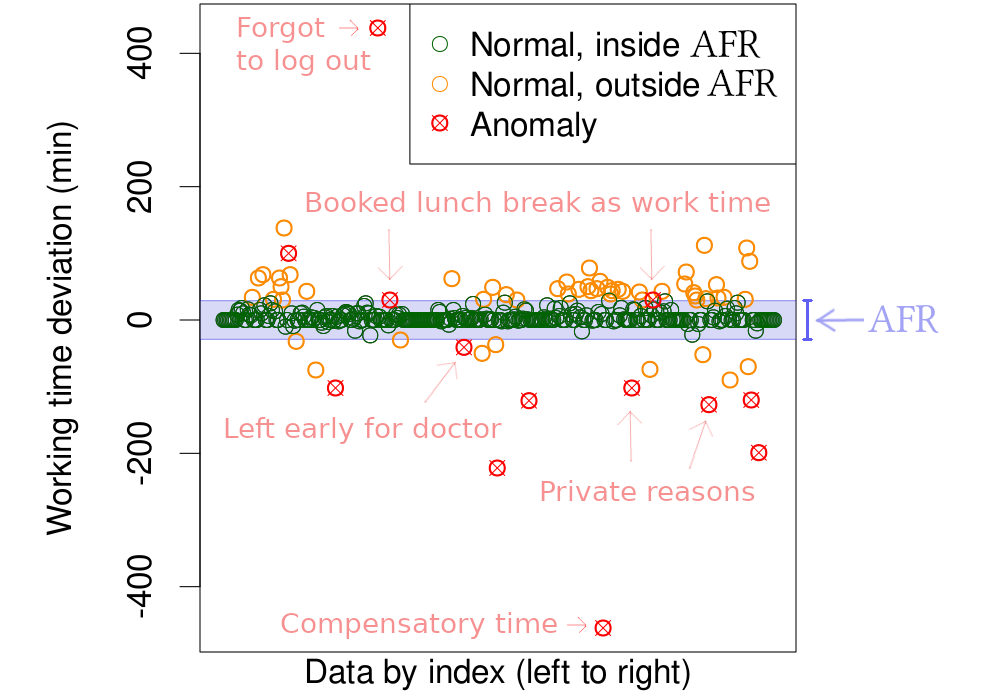}
	\caption{Visualization of the \textsc{Office} dataset. An anonymous worker's daily work time deviation has natural fluctuation due to flexible work time. Some deviations are anomalous, e.g., because the worker forget to log out when leaving the company.
	The region $[-29;29]$ contains no anomalies and is a valid AFR derived from domain knowledge.}
	\label{fig:chol}
\end{figure*}

\end{document}